\documentclass{article}

\PassOptionsToPackage{numbers, compress}{natbib}

 \usepackage[main, final]{neurips_2026}

\usepackage[utf8]{inputenc} % allow utf-8 input
\usepackage[T1]{fontenc}    % use 8-bit T1 fonts
\usepackage{hyperref}       % hyperlinks
\usepackage{url}            % simple URL typesetting
\usepackage{booktabs}       % professional-quality tables
\usepackage{amsfonts}       % blackboard math symbols
\usepackage{nicefrac}       % compact symbols for 1/2, etc.
\usepackage{microtype}      % microtypography
\usepackage{xcolor}         % colors
\usepackage{amsmath} 
\usepackage{multirow}
\usepackage{graphicx} 
\usepackage[table]{xcolor}
\definecolor{lightblue}{RGB}{220,235,255}
\definecolor{lightyellow}{RGB}{255,250,205}  % softer yellow
\definecolor{lightgreen}{RGB}{220,245,220}   % balanced green
\definecolor{lightred}{RGB}{255,230,230}     % gentle red
\usepackage{etoc}
\addtocontents{toc}{\protect\setcounter{tocdepth}{0}}
\usepackage{pifont}
\usepackage{amsmath,amssymb,amsthm}

\title{McCast: Memory-Guided Latent Drift Correction for Long-Horizon Precipitation Nowcasting}

\author{
  Penghui Wen$^{1}$ \quad Yu Luo$^{1}$ \quad Lintao Wang$^{1}$ \quad Mengwei He$^{1}$ \quad \textbf{Patrick Filippi$^{2}$} \\ \textbf{Thomas Francis Bishop$^{2}$} \quad \textbf{Zhiyong Wang$^{1}$}\thanks{Corresponding author.} \\
  $^{1}$School of Computer Science, The University of Sydney, Australia \\
  $^{2}$School of Life and Environmental Science, The University of Sydney, Australia \\
  \texttt{\{penghui.wen,yluo0465, lintao.wang, mengwei.he\}@sydney.edu.au,} \\
  \texttt{
  \{patrick.filippi, thomas.bishop, zhiyong.wang\}@sydney.edu.au
  }
}

\begin{document}

\maketitle

\begin{abstract}
Existing precipitation nowcasting methods typically adopt an autoregressive formulation, where future states are predicted from previous outputs. However, such an approach accumulates errors over long rollouts, causing forecasts to drift away from physically plausible evolution trajectories. Although various studies have attempted to alleviate this problem by improving step-wise prediction accuracy, they largely neglect the global temporal evolution of meteorological systems and lack mechanisms to actively correct drift during rollouts. To address this issue, we propose McCast, a memory-guided latent drift correction method for precipitation nowcasting. Rather than treating memory as an unordered dictionary of latent states for passive conditioning, McCast leverages temporally organized memory to actively correct autoregressive latent evolution. 
Specifically, McCast introduces a Drift-Corrective Memory Bank (DCBank) that explicitly estimates the temporally consistent drift corrections to calibrate the divergent trajectory. DCBank performs drift correction in two stages: a Corrective Latent Extractor first predicts an initial correction from the current prediction and a reference latent state, and a Correction-Aware Memory Retrieval module then refines the initial correction using temporally organized historical memory. By explicitly correcting latent evolution, instead of improving step-wise prediction accuracy only, McCast produces more temporally coherent and reliable long-horizon forecasts. Experiments on two widely used benchmarks, SEVIR and MeteoNet, show that McCast achieves state-of-the-art performance, particularly in challenging long-horizon forecasting scenarios. \footnote{Code will be made publicly available upon acceptance.}

\end{abstract}

\section{Introduction}
Precipitation nowcasting aims to forecast near-future weather evolution from recent remote-sensing observations, such as weather radar, and plays a critical role in many aspects of society, such as transportation and disaster management~\cite{ravuri2021skilful,zhang2023skilful}.
Recently, deep learning-based approaches~\cite{yu2024diffcast, Lin_2025_CVPR} have advanced precipitation nowcasting with strong performance and efficient inference, commonly framing it as an autoregressive process that predicts future frames step by step from previous outputs~\cite{ravuri2021skilful,gong2024cascast}. However, this formulation ignores the continuous temporal evolution of precipitation systems and treats each prediction step as locally optimal, causing the predicted sequence to progressively deviate from physically plausible evolution trajectories. Such compounding error, referred as \textit{precipitation drift} (Figure~\ref{fig:motivation}) which manifests as degraded spatial structure, motion inconsistency, and loss of coherence in long-horizon forecasts~\cite{yu2025pimmnet}. 

Existing methods mainly focus on mitigating the precipitation drift by improving per-step accuracy through larger models or more sophisticated architectures~\cite{fengperceptually, wen2026duocast}. However, due to the inherent stochasticity of atmospheric system, single-step prediction errors are inevitable, leaving the fundamental error accumulation problem unsolved. It motivates a paradigm shift from relying on locally optimal predictions to explicitly model global temporal evolution. A promising alternative is to introduce external memory~\cite{zhong2024memorybank}, which utilizes historical states as valuable cues about how precipitation intensity and motion evolve over time. In principle, such states can serve as a temporal anchor for long-horizon forecasting. 
However, existing vanilla memory-based designs~\cite{wu2025corgi, yu2025context} generally treat memory as a static and unordered dictionary of latent states for passive conditioning. Although this design allows the model to access historical context, simply using relevant historical content fails to model the temporal dependencies among memory states, and therefore cannot effectively capture the continuity of atmospheric evolution, nor can it provide targeted correction signals for precipitation drift.
Therefore, as illustrated in Figure~\ref{fig:motivation}, passive memory still exhibit noticeable drift, and the corresponding retrieved memories lack clear temporal consistency in both motion and intensity.

\begin{figure}[t]
  \centering
   \includegraphics[width=\linewidth]{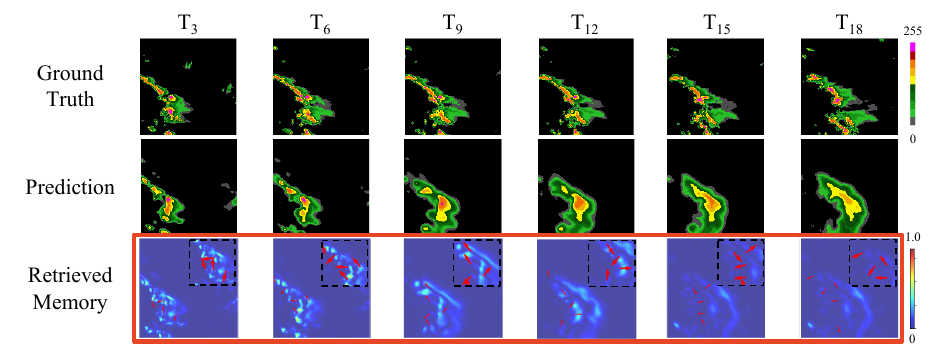}
    \caption{Visualization of DiffCast~\cite{yu2024diffcast} with a vanilla memory bank, including the prediction and retrieved memories. Background color denotes intensity, red arrows indicate motion, and the dotted box highlights a zoomed-in region. The prediction shows drift and the retrieved memory lacks temporal consistency in intensity and motion.}
    
   \label{fig:motivation}
\end{figure}

To address this gap, we propose \textbf{McCast}, a memory-guided latent drift correction method for precipitation nowcasting that explicitly models temporal dependencies among historical memory and reformulates memory from passive conditioning into an \textit{active correction mechanism} for autoregressive latent evolution.
Specifically, McCast introduces a \textbf{Drift-Corrective Memory Bank (DCBank)} that explicitly models the temporal dependencies among historical states and uses them to estimate a drift correction term for the current latent prediction, and updates the prior latent state into a posterior latent state through explicit correction rather than implicit feature fusion. 
DCBank contains two components: a \textbf{Corrective Latent Extractor}, which produces an initial correction by characterizing the discrepancy between the current prediction and a reference latent state, and a \textbf{Correction-Aware Memory Retrieval}, which further refines the initial correction by selectively retrieving temporally organized memory states based on both semantic relevance and drift consistency. In this way, McCast explicitly enforces a temporally consistent evolution and reduces long-horizon precipitation drift.

Our main contributions are summarized as follows:
\begin{itemize}

\item We propose McCast, a memory-guided latent drift correction method that explicitly leverages temporal dependencies in memory to mitigate precipitation drift.

\item We introduce a Drift-Corrective Memory Bank (DCBank) that formulates memory as an explicit correction mechanism rather than a passive conditioning signal, enabling effective calibration of latent representations using historical evolution patterns.

\item We design a two-step drift correction scheme consisting of a Corrective Latent Extractor for initial drift correction estimation and a Correction-Aware Memory Retrieval, to produce refined correction signals for latent evolution.

\item Extensive experiments on two real-world precipitation datasets, SEVIR~\cite{veillette2020sevir} and MeteoNet~\cite{larvor2020meteo}, demonstrate that McCast achieves superior performance and robustness compared to existing methods.
\end{itemize}

\section{Related Works}

\noindent\textbf{Precipitation Nowcasting Models}  typically forecast future frames autoregressively by conditioning on previous outputs. Existing expert models can be broadly divided into deterministic, probabilistic, and hybrid approaches. Deterministic models focus on learning mean spatiotemporal evolution~\cite{shi2015convolutional}, such as AlphaPre~\cite{Lin_2025_CVPR}, which decouples motion and intensity through phase-amplitude factorization, and PercpCast~\cite{fengperceptually}, which improves perceptual quality via rectified flow. Probabilistic models introduce latent variables to capture forecast uncertainty and fine-scale structure~\cite{ravuri2021skilful}, as exemplified by Prediff~\cite{gao2024prediff} and DuoCast~\cite{wen2026duocast}. Hybrid methods, such as CasCast~\cite{gong2024cascast} and DiffCast~\cite{yu2024diffcast}, combine deterministic trajectory prediction with stochastic refinement. Meanwhile, recent weather foundation models have shown strong transferability across forecasting tasks. For example, Aurora~\cite{bodnar2025foundation} has been adapted to diverse downstream applications such as cyclone tracks and weather prediction. 
Despite their effectiveness, these autoregressive approaches suffer from error accumulate over long rollouts, as informative historical states are gradually lost once they fall outside the short prediction window. To address it, we introduce a memory bank mechanism that preserves and retrieves historical states, providing a stable temporal anchor for long-horizon prediction.

\noindent\textbf{Memory Mechanisms in Video Generation} have been widely adopted to preserve long-horizon consistency. \textit{Explicit memory} methods, such as WorldExplorer~\cite{schneider2025worldexplorer} and Memory Forcing~\cite{huang2025memory}, represent video content as 3D point clouds and use reprojection constraints to guide generation. While effective for reducing spatial drift, they are computationally costly and rely on robust 3D reconstruction pipelines~\cite{wang2025vggt}. In contrast, \textit{implicit latent memory} methods store historical information in latent space. For example, Corgi~\cite{wu2025corgi} compresses key frames into latent memory for multi-clip generation and retrieves them via cross-attention, while Context as Memory~\cite{yu2025context} conditions each new frame on viewpoint-overlapping historical frames. LMCast~\cite{gao2025lmcast} leverages pre-trained large language models to retrieve historical information as prior knowledge for prediction.
Despite their effectiveness, existing latent-memory methods typically treat memory as a passive conditioning source and do not explicitly model temporal dependencies among memory states. This is suboptimal for precipitation nowcasting, where future evolution is governed by continuous historical dynamics. We therefore propose a memory-guided latent drift correction method, which exploits sequential dependencies in memory to actively correct accumulated precipitation drift.

\section{Methodology}
\label{section:methodology}

%\subsection{Problem Formulation and Autoregressive Drift} 
\subsection{Overall Architecture}
Precipitation nowcasting predicts future radar frames from historical observations.
Given $\mathbf{X}\in\mathbb{R}^{T_i\times H\times W}$ as input, the model forecasts
$\mathbf{Y}\in\mathbb{R}^{T_o\times H\times W}$ by learning
$p(\mathbf{Y}\mid\mathbf{X})$, where $T_i,T_o$ denote the input and prediction
horizons and $H,W$ denote spatial resolution. By following standard autoregressive
protocols~\cite{yu2024diffcast,fengperceptually}, future frames are generated
rollout by rollout: the latest context window $\mathbf{X}_r$ predicts
$\hat{\mathbf{Y}}_r$, which is then appended to the context. As generated frames increasingly dominate later contexts, errors accumulate and induce latent-space precipitation drift. We maintain a memory bank of corrected historical latent states: $\mathcal{M}_{<r}=\{\mathbf{Z}_1,\ldots,\mathbf{Z}_{r-1}\} \in \mathbb{R}^{R\times L\times D}$, where $R$ is the number of stored historical rollout steps, $L$ is the number of latent tokens, and $D$ is the latent feature dimension. Each memory entry $\mathbf{Z}_i \in\mathbb{R}^{L\times D}$ denotes the corrected posterior latent state from a previous rollout step. In step $r$, the model only retrieves from $\mathcal{M}_{<r}$, ensuring that future information is not used.

\begin{figure}[t]
  \centering
  \includegraphics[width=\linewidth]{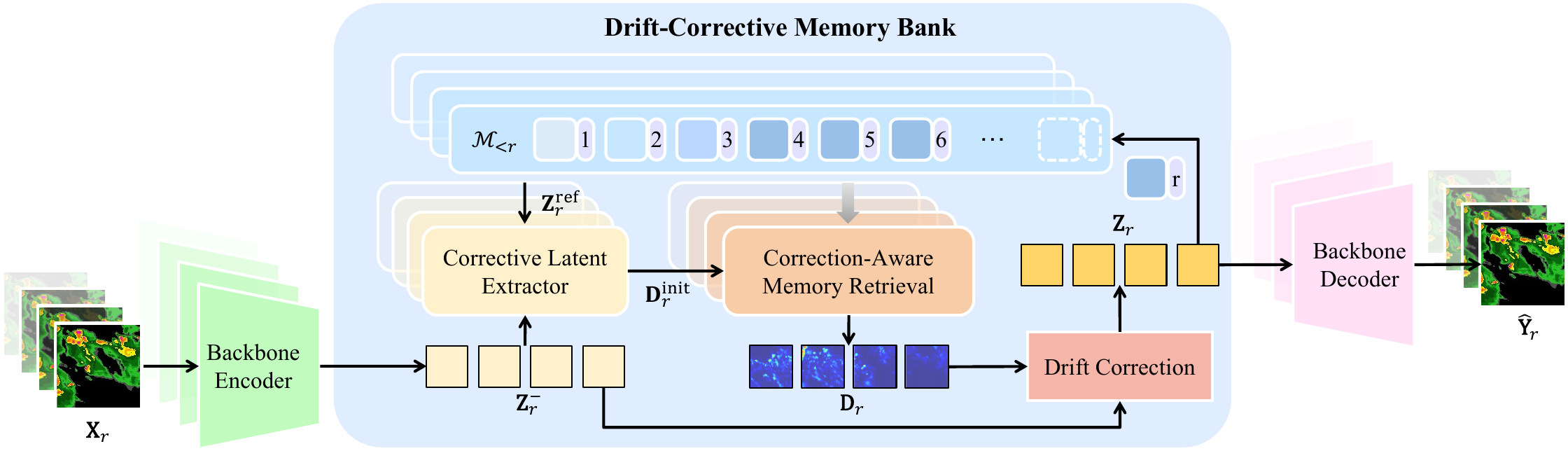}
  \caption{Overview of McCast. A backbone encoder maps the input sequence to a prior latent representation, which is refined by the Drift-Corrective Memory Bank to produce a posterior latent. The posterior latent is then decoded by the backbone decoder to generate the forecast.}
  \label{fig:architecture}
\end{figure}

%\subsection{Overall Architecture}
As shown in Figure~\ref{fig:architecture}, our model follows an encoder-decoder framework consisting of a backbone encoder, a Drift-Corrective Memory Bank module (DCBank), and a backbone decoder. 
Given an input sequence $\mathbf{X}_r$ at rollout step $r$, the backbone encoder first maps it to a prior latent representation $\mathbf{Z}^{-}_r$. 
DCBank then estimates a drift correction term from historical memory and updates the prior latent into a posterior latent representation $\mathbf{Z}_r$, which is decoded to produce the forecast $\hat{\mathbf{Y}}_r$.
Our central design is to use memory as an explicit \emph{correction operator} , rather than as an additional conditioning feature for the decoder. This makes the role of memory precise: it corrects latent-state drift instead of implicitly fusing historical features into prediction. Additional justification is provided in the Appendix.
Formally,
\begin{equation}
\mathbf{Z}^{-}_r = \mathcal{E}(\mathbf{X}_r), \quad
\mathbf{Z}_r = \mathrm{DCBank}(\mathbf{Z}^{-}_r, \mathcal{M}_{<r}), \quad
\mathbf{\hat{Y}}_r = \mathcal{D}(\mathbf{Z}_r),
\end{equation}
where $\mathcal{E}$ and $\mathcal{D}$ denote the backbone encoder and backbone decoder, respectively.
After each rollout step, the posterior latent $\mathbf{Z}_r$ is written back to the memory bank for future retrieval.

\subsection{Drift-Corrective Memory Bank}
The Drift-Corrective Memory Bank (DCBank) estimates an explicit correction term for the current prior latent representation by leveraging historical rollout states (Figure~\ref{fig:architecture2}). Unlike conventional memory modules that use history as passive conditioning, DCBank constrains memory to act through a residual latent correction. From a state-estimation perspective, it transforms the prior latent into a corrected posterior by first estimating a coarse correction and then refining it with temporally organized memory, thereby compensating for accumulated drift.

DCBank consists of two main components: a Corrective Latent Extractor, which estimates an initial drift correction term, and a Correction-Aware Memory Retrieval module, which further refines it using historical memory to obtain a more reliable correction. Formally,
\begin{equation}
\mathbf{D}_r^{\mathrm{init}} = \mathcal{E}_{\delta}(\mathbf{Z}_r^-, \mathbf{Z}_r^{\mathrm{ref}}),  \quad
\mathbf{D}_r = \mathcal{E}_{\psi}(\mathbf{D}_r^{\mathrm{init}}, \mathcal{M}_{<r}), 
\end{equation}
where $\mathcal{E}_{\delta}$ and $\mathcal{E}_{\psi}$ denote the Corrective Latent Extractor and the Correction-Aware Memory Retrieval module, respectively. $\mathbf{Z}_r^{\mathrm{ref}}$ is a temporal reference latent state.

\subsubsection{Corrective Latent Extractor}
The Corrective Latent Extractor estimates an initial drift correction by comparing the current prior latent $\mathbf{Z}_r^-$ with a reference latent $\mathbf{Z}_r^{\mathrm{ref}}$. We use the most recent memory entry as $\mathbf{Z}_r^{\mathrm{ref}}$, and fall back to $\mathbf{Z}_r^-$ when $\mathcal{M}_{<r}$ is empty. To improve robustness, it combines two complementary discrepancy signals: (i) an explicit raw drift residual that directly measures low-level deviation, and (ii) a context-aware discrepancy that captures structured mismatch in representation space. 

Specifically, we define the raw drift residual and the context-aware discrepancy as:
\begin{equation}
\boldsymbol{\delta}_r = \mathbf{Z}_r^- - \mathbf{Z}_r^{\mathrm{ref}} \in \mathbb{R}^{ L \times D},
\qquad
\boldsymbol{\Delta}_r =  \mathbf{W}_{\mathrm{pre}} \cdot \mathbf{Z}_r^- - \mathbf{W}_{\mathrm{ref}} \cdot \mathbf{Z}_r^{\mathrm{ref}}  \in \mathbb{R}^{ L \times D},
\end{equation}
where $\mathbf{W}_{\mathrm{ref}}$ and $\mathbf{W}_{\mathrm{pre}}$ are learnable linear projections. The raw residual $\boldsymbol{\delta}_r$ captures direct deviation from the reference trajectory, whereas $\boldsymbol{\Delta}_r$ captures a more stable, representation-level discrepancy.

To adaptively fuse these two signals, we introduce a gating mechanism conditioned on both the prior latent and the reference latent:
\begin{equation}
\mathbf{G}_\text{init} = \sigma\left(\mathbf{W}_\text{init} \cdot [ \mathbf{Z}^{-}_r , \mathbf{Z}_r^\mathrm{ref}] \right) \in \mathbb{R}^{ L \times D}, 
\end{equation}
\begin{equation}
\mathbf{F}_r = \mathbf{G}_\text{init} \odot (\mathbf{W}_\delta \cdot \boldsymbol{\delta}_r) + (1 - \mathbf{G}_\text{init}) \odot (\boldsymbol{\Delta}_r) \in \mathbb{R}^{ L \times D} ,
\end{equation}
where $[\cdot,\cdot]$ is concatenation, $\mathbf{W}_\delta, \mathbf{W}_\text{init}$ are learnable projections, $\sigma(\cdot)$ is the sigmoid function and $\odot$ denotes element-wise multiplication. 
In this way, the gating mechanism dynamically balances the raw residual and the context-aware discrepancy, yielding a more robust representation of discrepancy. 
Finally, it is mapped to the initial drift correction term $\mathbf{D}^{\text{init}}_r$ via a learnable projection $\mathbf{W}_\text{O}$:
\begin{equation}
\mathbf{D}^{\text{init}}_r = \mathbf{W}_\text{O} \cdot \mathbf{F}_r \in \mathbb{R}^{ L \times D}.
\end{equation}

\begin{figure}[t]
  \centering
  \includegraphics[width=\linewidth]{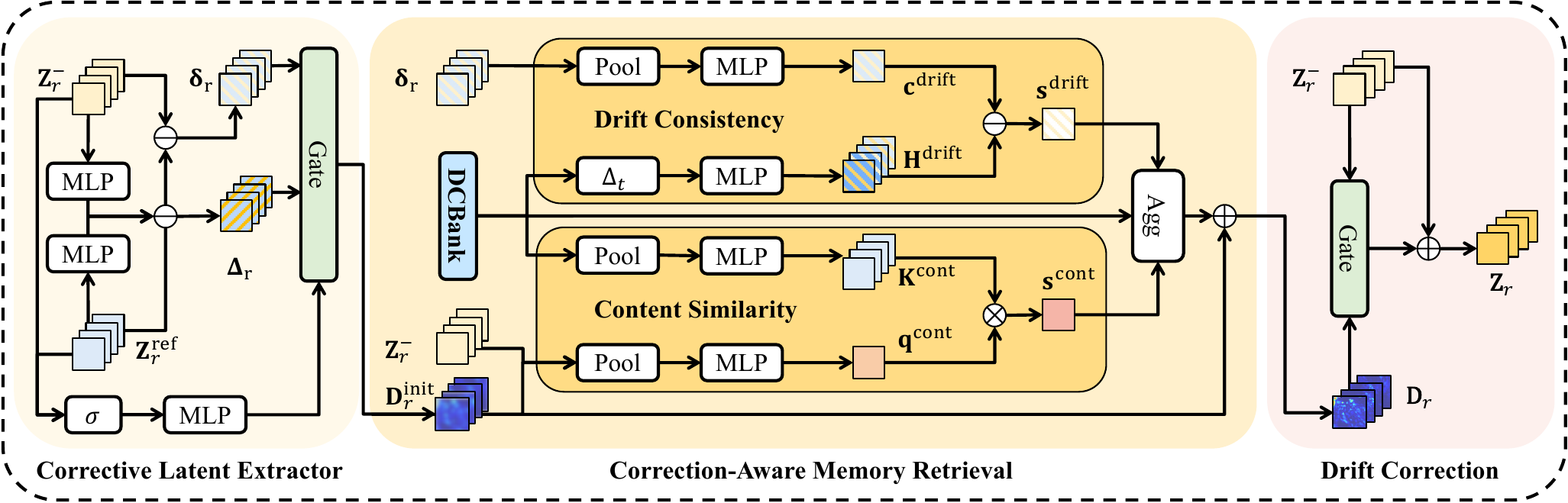}
    \caption{
Details of the Drift-Corrective Memory Bank. An initial drift correction is estimated by the Corrective Latent Extractor, refined through Correction-Aware Memory Retrieval, and applied to the prior latent to obtain the posterior latent.
}
    \label{fig:architecture2}
\end{figure}

\subsubsection{Correction-Aware Memory Retrieval}

The initial correction $\mathbf{D}_r^{\mathrm{init}}$ only uses a local discrepancy between the current prior latent and the most recent reference state. To obtain a more reliable correction, we retrieve information from the historical memory bank. A useful memory state should satisfy two criteria: it should be content-relevant to the current state, and its temporal evolution should be consistent with the current drift pattern. This is important because content-only retrieval may favor the most recent memory entries, which are often closest to the current latent but may also contain stronger accumulated drift.
To preserve temporal order, we add learnable positional embeddings along the rollout dimension to the memory entries and continue to denote the resulting memory bank as $\mathcal{M}_{<r}$ for simplicity.

\noindent\textbf{Content-based retrieval.}
We construct a query from the preliminarily corrected latent state and keys from all memory states:
\begin{equation}
\mathbf{q}^{\text{cont}} = \mathbf{W}_q \cdot \mathrm{Pool}(\mathbf{Z}^{-}_r + \mathbf{D}^{\text{init}}_r) \in \mathbb{R}^{ D}, \quad
\mathbf{K}^{\text{cont}} = \mathbf{W}_K \cdot \mathrm{Pool}(\mathcal{M}_{<r}) \in \mathbb{R}^{ R \times D},
\end{equation}
where $\mathrm{Pool}(\cdot)$ denotes spatial average pooling over $L$ dimension, which summarizes each rollout state into a compact descriptor for stable and efficient memory retrieval.
The resulting content relevance score between preliminarily corrected latent and memories is
\begin{equation}
\mathbf{s}^{\text{cont}} = \frac{\mathbf{q}^{\text{cont}} (\mathbf{K}^{\text{cont}})^{\top}}{\sqrt{D}} \in \mathbb{R}^{ R}.
\end{equation}

\noindent\textbf{Drift-consistency retrieval.}
We characterize the current drift pattern by the raw residual $\boldsymbol{\delta}_r$ and approximate historical drift patterns $\mathbf{H}^{\mathrm{drift}}$ using differences between consecutive memory states. The first entry is padded with zero drift so that the drift sequence matches the memory length. Formally,

\begin{equation}
\mathbf{c}^\text{drift} = \mathbf{W}_c \cdot \mathrm{Pool}(\boldsymbol{\delta}_r) \in \mathbb{R}^{D},
\end{equation}
\begin{equation}
\Delta_t \mathcal{M} = \{\mathbf{0},\mathbf{Z}_2-\mathbf{Z}_1,\ldots,\mathbf{Z}_{r-1}-\mathbf{Z}_{r-2}\}, \quad
\mathbf{H}^{\text{drift}} = \mathbf{W}_h \cdot \mathrm{Pool}(\Delta_t \mathcal{M}) \in \mathbb{R}^{R \times D},
\end{equation}
where $\mathbf{W}_c$ and $\mathbf{W}_h$ denote learnable linear projections. 
The drift consistency score is defined as
\begin{equation}
\mathbf{s}^{\text{drift}} = - \mathrm{Pool}_{D} \left( \left| \mathbf{c}^\text{drift} - \mathbf{H}^{\text{drift}} \right|^2 \right) \in \mathbb{R}^{ R},
\end{equation}
where $\mathrm{Pool}_{D}(\cdot)$ denotes averaging over the feature dimension. This score assigns higher relevance to memory states whose drift patterns are more consistent with the current drift.

\noindent\textbf{Joint relevance and aggregation.}
We combine the two relevance scores and normalize them to obtain the final temporal weights. Here, $\lambda_{\mathrm{drift}}$ controls the relative contribution of the drift-consistency term:
\begin{equation}
\mathbf{s} = \mathrm{softmax}(\mathbf{s}^{\text{cont}} + \lambda_{\text{drift}} \cdot \mathbf{s}^{\text{drift}}) \in \mathbb{R}^{ R}.
\end{equation}
Using these weights, we aggregate the memory along the temporal dimension with a learnable linear projection $\mathbf{W}_\text{agg}$. The resulting representation is then combined with the initial correction term to obtain the memory-enhanced drift correction term:
\begin{equation}
\mathbf{D}_r = \mathbf{W}_{\text{agg}} \left( \mathbf{s} \cdot \mathcal{M}_{<r} \right) + \mathbf{D}^{\text{init}}_r \in \mathbb{R}^{ L \times D}.
\end{equation}

\subsubsection{Drift Correction and Memory Update}
Finally, we integrate the memory-enhanced drift correction term into the prior latent representation via a gating mechanism:
\begin{equation}
\mathbf{G}_\text{corr} = \sigma \left(\mathbf{W}_\text{corr}[\mathbf{Z}^{-}_r, \mathbf{D}_r]\right), \quad
\mathbf{Z}_r = \mathbf{Z}^{-}_r + \mathbf{G}_\text{corr} \odot \mathbf{D}_r.
\end{equation}

The correction gate prevents over-correction by allowing the model to apply memory-derived correction only to latent tokens and channels where drift evidence is reliable.
The resulting $\mathbf{Z}_r$ is then used for subsequent prediction and stored into the memory bank for future rollout steps.

\subsection{Optimization}

Following~\cite{fengperceptually}, we train the model with a joint mean-squared error (MSE) and learned perceptual image patch similarity (LPIPS) objective. The MSE term encourages pixel-wise fidelity between the predicted precipitation sequence and the ground truth, while the LPIPS term promotes perceptual and structural consistency by measuring discrepancies in a learned feature space.  
They are defined as
\begin{equation}
\mathcal{L}_{\mathrm{mse}} = \mathbb{E} \left\| \hat{\mathbf{Y}}-\mathbf{Y} \right\|^2, \quad 
\mathcal{L}_{\mathrm{lpips}} = \mathbb{E} \left[ \sum_l \alpha_l \left\| \phi_l(\hat{\mathbf{Y}})-\phi_l(\mathbf{Y}) \right\|^2 \right].
\end{equation}

Consequently, the overall training objective combines MSE and LPIPS with equal weights:
\begin{equation}
\mathcal{L}
=
\mathcal{L}_{\mathrm{mse}}
+
\mathcal{L}_{\mathrm{lpips}}.
\label{eq:final_loss}
\end{equation}
Here, $\phi_l(\cdot)$ denotes the feature map extracted from the $l$-th layer of a pretrained network for LPIPS calculation, and $\alpha_l$ is the weighting coefficient of the $l$-th layer. More details are in the Appendix.

\section{Experiments \& Discussions}
\label{section:experiments_discussions}

\subsection{Experimental Settings}
\noindent\textbf{Datasets.}
Following common practice~\cite{Lin_2025_CVPR, fengperceptually}, we conduct experiments on two widely used precipitation nowcasting benchmarks, SEVIR~\cite{veillette2020sevir} and MeteoNet~\cite{larvor2020meteo}, both of which provide radar observations of precipitation events. We adopt the standard setting of using 5 historical frames (25 minutes) as input to predict the subsequent 20 frames (100 minutes). The data distributions of the two datasets are summarized in Table~\ref{tab:data_distribution}. More details are provided in the Appendix.

\begin{table}[t]
\centering
\caption{
Dataset statistics. $N_{\mathrm{tr}}$, $N_{\mathrm{val}}$, and $N_{\mathrm{te}}$ denote the training, validation, and test set sizes. Each sample uses 5 input frames to predict 20 future frames over a 100-minute horizon.
}
\label{tab:data_distribution}
\vspace{2mm}
\small
\setlength{\tabcolsep}{5pt}
\begin{tabular}{l|cccccccc}
\toprule
\multirow{2}{*}{Dataset} 
& \multicolumn{3}{c}{Samples} 
& \multirow{2}{*}{Resolution} 
& \multicolumn{2}{c}{Frames} 
& \multicolumn{2}{c}{Spatio-temporal Coverage} \\
\cmidrule(lr){2-4} \cmidrule(lr){6-7} \cmidrule(lr){8-9}
& $N_{\mathrm{tr}}$ 
& $N_{\mathrm{val}}$ 
& $N_{\mathrm{te}}$ 
& $(H,W)$ 
& $T_i$ 
& $T_o$ 
& Spatial 
& Temporal \\
\midrule
SEVIR~\cite{veillette2020sevir}
& 23,808 & 6,016 & 8,100 
& $128{\times}128$ 
& 5 & 20 
& $384{\times}384$ km 
& 5 min \\

MeteoNet~\cite{larvor2020meteo}
& 6,308 & 1,310 & 1,310 
& $128{\times}128$ 
& 5 & 20 
& $550{\times}550$ km 
& 5 min \\
\bottomrule
\end{tabular}%
\end{table}

\noindent\textbf{Evaluation Metrics.}
Following prior works~\cite{Lin_2025_CVPR, fengperceptually}, we evaluate nowcasting performance using CSI, HSS, LPIPS, and SSIM. We report CSI$_\text{M}$, averaged over dataset-specific rainfall thresholds, as well as CSI at the two highest thresholds to assess high-intensity precipitation. HSS measures forecast skill relative to random chance, while SSIM and LPIPS evaluate structural fidelity and perceptual similarity, respectively. Further details are provided in Appendix.

\noindent\textbf{Implementation Details.}
We train all models using AdamW with a learning rate of $5\times10^{-4}$. We set $\lambda_\text{drift}=0.3$ without exhaustive hyperparameter tuning. The backbone encoder $\mathcal{E}$ and decoder $\mathcal{D}$ are initialized from the pretrained Aurora~\cite{bodnar2025foundation} model. During finetuning, we freeze the self-attention layers of the backbone decoder and apply low-rank adaptation (LoRA)~\cite{hu2022lora} with rank 8 to adapt them. All experiments are conducted on three NVIDIA A6000 GPUs and more details are in the Appendix.

\subsection{Overall Performance}

\begin{table*}[t]
\centering
\caption{
Quantitative comparison with state-of-the-art precipitation nowcasting methods on SEVIR and MeteoNet.
Best results are in $\textbf{bold}$, and second-best results are $\underline{underlined}$.
$\dagger$ denotes semi-memory-based modeling, and $\ddagger$ denotes explicit memory-based modeling.
}
\label{tab:sota}
\vspace{1mm}
\small
\setlength{\tabcolsep}{2.2pt}
\renewcommand{\arraystretch}{1.04}
\resizebox{\textwidth}{!}{%
\begin{tabular}{l cccccc cccccc}
\toprule
\multirow{2}{*}{Method} 
& \multicolumn{6}{c}{SEVIR} 
& \multicolumn{6}{c}{MeteoNet} \\
\cmidrule(lr){2-7} \cmidrule(lr){8-13}
& CSI$_\text{M}$$\uparrow$
& CSI$_\text{181}$$\uparrow$
& CSI$_\text{219}$$\uparrow$
& HSS$\uparrow$
& LPIPS$\downarrow$
& SSIM$\uparrow$
& CSI$_\text{M}$$\uparrow$
& CSI$_\text{24}$$\uparrow$
& CSI$_\text{32}$$\uparrow$
& HSS$\uparrow$
& LPIPS$\downarrow$
& SSIM$\uparrow$ \\
\midrule
ConvGRU~\cite{shi2017deep}      & 0.290 & 0.088 & 0.035 & 0.362 & 0.265 & 0.610 & 0.340 & 0.299 & 0.143 & 0.467 & 0.254 & 0.783 \\
MAU~\cite{chang2021mau}        & 0.308 & 0.107 & 0.052 & 0.386 & 0.387 & 0.651 & 0.323 & 0.284 & 0.100 & 0.445 & 0.302 & 0.790 \\
SimVP~\cite{gao2022simvp}    & 0.311 & 0.111 & 0.052 & 0.392 & 0.389 & 0.651 & 0.335 & 0.300 & 0.113 & 0.457 & 0.341 & 0.780 \\
FourCastNet~\cite{pathak2022fourcastnet}  & 0.269 & 0.072 & 0.034 & 0.336 & 0.422 & 0.598 & 0.303 & 0.253 & 0.109 & 0.422 & 0.465 & 0.645 \\
Earthformer~\cite{gao2022earthformer}  & 0.289 & 0.084 & 0.025 & 0.367 & 0.392 & 0.663 & 0.321 & 0.288 & 0.124 & 0.449 & 0.363 & 0.777 \\
PhyDNet~\cite{guen2020disentangling}      & 0.302 & 0.104 & 0.028 & 0.381 & 0.370 & 0.653 & 0.338 & 0.319 & 0.137 & 0.467 & 0.284 & 0.782 \\
EarthFarseer~\cite{wu2024earthfarsser} & 0.300 & 0.099 & 0.041 & 0.383 & 0.383 & 0.633 & 0.340 & 0.317 & 0.137 & 0.473 & 0.300 & 0.754 \\
NowcastNet~\cite{zhang2023skilful}   & 0.279 & 0.077 & 0.035 & 0.351 & 0.404 & \underline{0.684} & 0.343 & 0.321 & 0.160 & 0.475 & 0.288 & 0.788 \\
FACL~\cite{yan2024fourier}     & 0.302 & 0.115 & 0.052 & 0.368 & 0.215 & 0.681 & 0.360 & 0.331 & 0.181 & 0.501 & 0.216 & 0.776 \\
AlphaPre~\cite{Lin_2025_CVPR}     & \underline{0.326} & 0.133 & 0.055 & \underline{0.411} & 0.286 & \textbf{0.688} & \underline{0.382} & 0.363 & \underline{0.200} & \underline{0.516} & 0.199 & \underline{0.797} \\
\midrule
DiffCast~\cite{yu2024diffcast}$^\dagger$
& 0.305 & 0.130 & 0.058 & 0.400 & 0.209 & 0.648 & 0.351 & 0.334 & 0.181 & 0.485 & 0.154 & 0.789 \\
PercpCast~\cite{fengperceptually}$^\dagger$ 
& 0.320 & \underline{0.136} & \underline{0.059} & 0.403 & \underline{0.206} & 0.665 & 0.372 & \underline{0.364} & 0.196 & 0.505 & \underline{0.133} & 0.788 \\
\midrule
\rowcolor{blue!8}
\textbf{McCast (Ours)$^\ddagger$} 
& \textbf{0.339} 
& \textbf{0.172} 
& \textbf{0.107} 
& \textbf{0.438} 
& \textbf{0.196} 
& 0.669 
& \textbf{0.392} 
& \textbf{0.386} 
& \textbf{0.224} 
& \textbf{0.532} 
& \textbf{0.122} 
& \textbf{0.803} \\
\bottomrule
\end{tabular}%
}
\end{table*}

\begin{figure}[t]
  \centering
   \includegraphics[width=\linewidth]{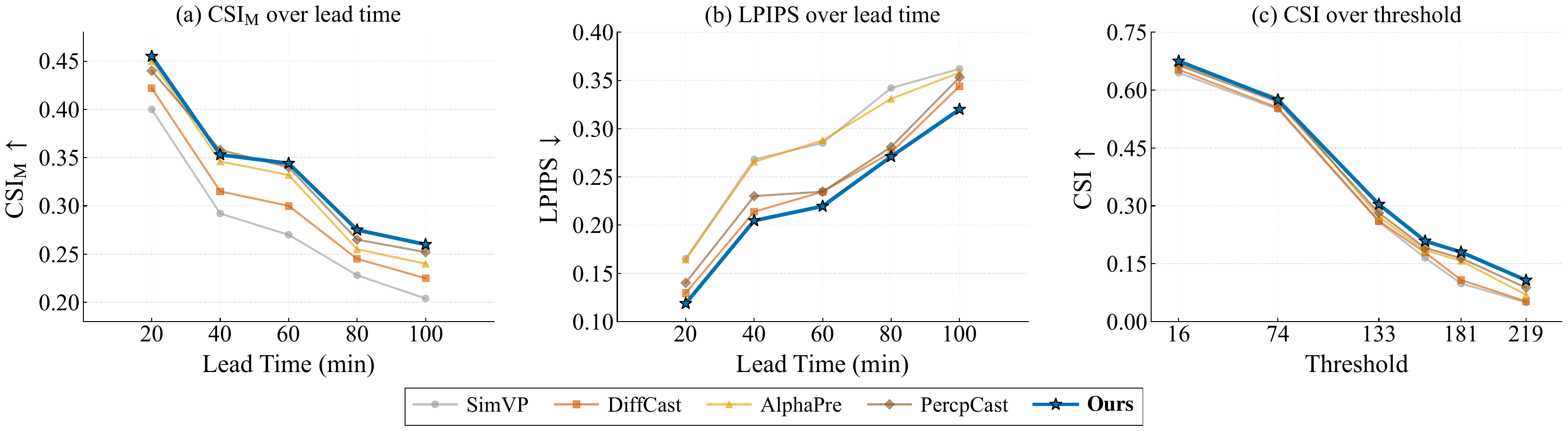}
    \caption{Performance comparison on SEVIR across different lead time and intensity thresholds.}
   \label{fig:time_threshold}
\end{figure}

\noindent\textbf{Quantitative Analysis.}
Table~\ref{tab:sota} compares McCast with recent state-of-the-art precipitation nowcasting methods. We categorize the baselines into methods without explicit memory mechanisms and semi-memory-based methods, where the latter use a global coarse prediction as auxiliary memory during autoregressive prediction. Our method consistently outperforms those methods across both datasets. 
Compared to the strongest baseline, it improves CSI$_\text{M}$ by 5.6\%–5.9\% and HSS by 5.4\%–8.0\%, while remaining competitive on perceptual and structural metrics such as LPIPS and SSIM. Gains become more evident under heavier rainfall conditions: at higher intensity thresholds, our method achieves approximately 5.9\%–21.1\% relative improvements in CSI on SEVIR and MeteoNet, highlighting its stronger capability to capture intense and localized precipitation patterns.
AlphaPre achieves slightly higher SSIM on SEVIR, as SSIM favors smooth, structurally coherent forecasts that align well with its deterministic phase-amplitude design.
Figure~\ref{fig:time_threshold} further corroborates these results. Subplots (a) and (b) show CSI$_\text{M}$ and LPIPS results at different prediction lead times, where our method consistently outperforms recent baselines at nearly all horizons, demonstrating its robustness in autoregressive forecasting settings. Subplot (c) reports the results of CSI under different rainfall intensity thresholds and shows that our method performs particularly well at higher thresholds, demonstrating its effectiveness in more difficult forecasting scenarios.

\noindent\textbf{Qualitative Analysis.}
\begin{figure}[t]
  \centering
   \includegraphics[width=\linewidth]{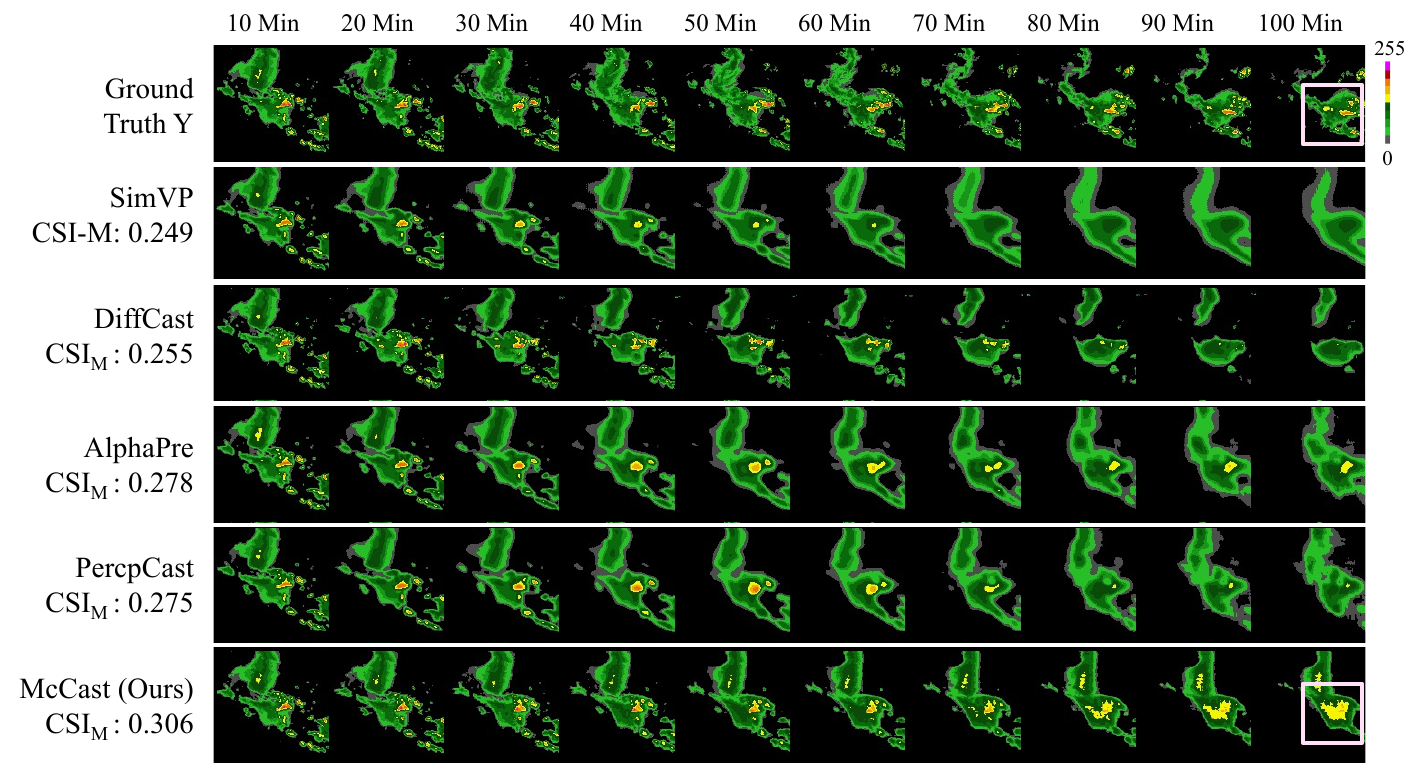}
   \caption{
Qualitative comparison on a SEVIR event. McCast produces forecasts with finer precipitation structures and more coherent temporal evolution than state-of-the-art methods. 
   }
   \label{fig:qualitative}
\end{figure}
Figure~\ref{fig:qualitative} shows qualitative comparisons with state-of-the-art methods on events from SEVIR. Existing methods generally struggle to preserve both coherent large-scale evolution and fine-scale precipitation structure over long forecasting horizons. SimVP~\cite{gao2022simvp} and PhyDNet~\cite{guen2020disentangling} recover the overall precipitation patterns reasonably well at short lead times, but their predictions become progressively blurrier and lose local details as the horizon increases. DiffCast~\cite{yu2024diffcast} and AlphaPre~\cite{Lin_2025_CVPR} better preserve fine-scale structures, yet still exhibit noticeable structural deviations in long-range evolution. In particular, although DiffCast produces sharper local structures, its predicted boundaries tend to drift from the ground-truth evolution. DuoCast~\cite{wen2026duocast} shows a disappearing medium-intensity precipitation region, resulting in an underestimated intensity. By contrast, McCast consistently preserves both the overall precipitation dynamics and local structural details.

\subsection{Ablation Study}

\begin{table*}[t]
\centering
\caption{
Ablation study on SEVIR and MeteoNet. Best results are highlighted in \textbf{bold}. CLE, CAMR, and DC denote the Corrective Latent Extractor, Correction-Aware Memory Retrieval, and Drift Correction module, respectively.
}
\label{tab:ablation}
\vspace{1mm}
\small
\setlength{\tabcolsep}{3.0pt}
\renewcommand{\arraystretch}{1.08}
\resizebox{\textwidth}{!}{%
\begin{tabular}{l cccccc cccccc}

\toprule
\multirow{2}{*}{Variant} & \multicolumn{6}{c}{SEVIR} & \multicolumn{6}{c}{MeteoNet} \\
\cmidrule(lr){2-7} \cmidrule(lr){8-13} & CSI$_\text{M}$$\uparrow$ & CSI$_\text{181}$$\uparrow$

& CSI$_\text{219}$$\uparrow$
& HSS$\uparrow$
& LPIPS$\downarrow$
& SSIM$\uparrow$
& CSI$_\text{M}$$\uparrow$
& CSI$_\text{24}$$\uparrow$
& CSI$_\text{32}$$\uparrow$
& HSS$\uparrow$
& LPIPS$\downarrow$
& SSIM$\uparrow$ \\
\midrule

w/o DCBank      
& 0.314 & 0.143 & 0.072 & 0.415 & 0.285 & 0.668 
& 0.370 & 0.312 & 0.157 & 0.511 & 0.204 & 0.778 \\
w/o CLE        
& 0.317 & 0.157 & 0.076 & 0.420 & 0.219 & 0.658 
& 0.365 & 0.342 & 0.172 & 0.520 & 0.156 & 0.791 \\
w/o CAMR        
& 0.324 & 0.154 & 0.086 & 0.425 & 0.243 & 0.664 
& 0.378 & 0.330 & 0.165 & 0.519 & 0.175 & 0.801 \\
w/o DC 
& 0.321 & 0.149 & 0.082 & 0.428 & 0.250 & 0.651 
& 0.374 & 0.310 & 0.196 & 0.508 & 0.187 & 0.792 \\

\midrule
\rowcolor{blue!8}
\textbf{McCast} 

& \textbf{0.339} & \textbf{0.172} & \textbf{0.107} & \textbf{0.438} & \textbf{0.196} & \textbf{0.669} 

& \textbf{0.392} & \textbf{0.386} & \textbf{0.224} & \textbf{0.532} & \textbf{0.122} & \textbf{0.813} \\

\bottomrule

\end{tabular}%
}
\end{table*}

\noindent\textbf{Effectiveness of the Memory Mechanism.}
\label{ablation_effectiness_of_the_memory_mechanism}
Table~\ref{tab:ablation} (row 1) validates the effectiveness of the proposed memory mechanism for precipitation nowcasting. Removing DCBank consistently degrades performance, with a more pronounced drop under high-intensity precipitation thresholds, indicating its importance for maintaining reliable forecasts in challenging rainfall regimes. Qualitatively, DCBank better preserves fine-grained precipitation structures and intense rainfall cores over long lead times. In contrast, the model without DCBank tends to produce blurred boundaries and attenuated high-intensity responses, suggesting that the memory mechanism helps mitigate autoregressive drift and maintain temporal consistency. More details are provided in the Appendix.

\noindent\textbf{Effectiveness of Corrective Latent Extractor.}
Table~\ref{tab:ablation} (row 2) validates the contribution of the Corrective Latent Extractor (CLE). To isolate its effect, we replace CLE with a vanilla self-attention module for extracting the initial correction term while keeping the remaining components unchanged. The resulting performance drop shows that explicitly modeling the preliminary drift correction is more effective than relying on generic attention-based feature extraction. This suggests that CLE provides a more informative correction prior for subsequent memory retrieval and latent refinement.

\noindent\textbf{Effectiveness of Correction-Aware Memory Retrieval.}
Table~\ref{tab:ablation} (row 3) confirms the importance of Correction-Aware Memory Retrieval (CAMR). Specifically, we remove CAMR and directly apply the initial correction $\mathbf{D}^{\mathrm{init}}_r$ to update the prior latent. Figure~\ref{fig:ablation2} compares both predictions and memory visualizations, where background colors represent intensity changes between successive frames and arrows indicate motion. With CAMR, the retrieved memories exhibit more coherent intensity evolution and motion patterns that are better aligned with the ground truth, enabling the model to preserve precipitation boundaries over time. Without CAMR, the memories show weakened intensity differences and less organized motion, leading to less temporally coherent forecasts. This shows that CAMR effectively leverages sequential dependencies in memory to correct  drift.

\begin{figure}[t]
  \centering
   \includegraphics[width=\linewidth]{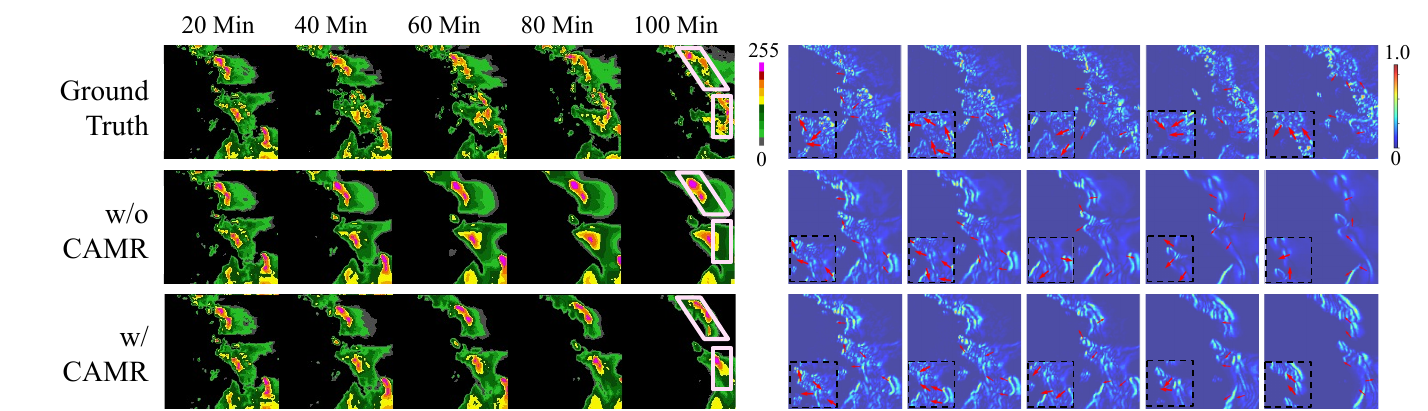}
    \caption{Qualitative comparison on a SEVIR event with and without the correction-aware memory retrieval module. The module better preserves fine-grained precipitation structures and shape. The left panel shows predictions, and the right panel shows the corresponding memory visualizations.}
   \label{fig:ablation2}
\end{figure}

\noindent\textbf{Effectiveness of Active Drift Correction.}
Table~\ref{tab:ablation} (row 4) shows that actively correcting latent evolution with memory outperforms using memory as a passive conditioning signal. For comparison, we remove the drift correction module and instead feed the estimated correction term to the backbone decoder as an additional condition. As shown in Figure~\ref{fig:ablation3},
active drift correction better maintains high-intensity precipitation in the central region at long lead times. The temporal curves also show consistently higher CSI$_\text{M}$ and lower mean absolute error, with the advantage becoming more pronounced from 75 to 100 minutes. This demonstrates that explicitly applying memory-derived corrections to the latent trajectory is crucial for mitigating long-horizon precipitation drift.

\begin{figure}[t]
  \centering
   \includegraphics[width=\linewidth]{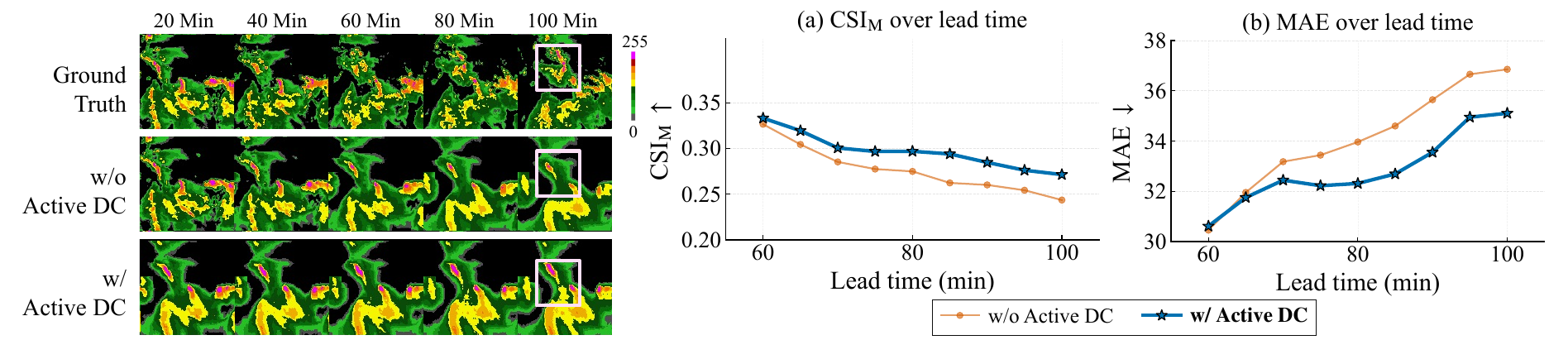}
    \caption{Qualitative comparison on a SEVIR event with and without active drift correction. Active drift correction effectively mitigates precipitation drift, particularly at longer lead times. The left panel shows predicted frames, while the right panel reports CSI$_\text{M}$ and MAE at each prediction timestep.}
   \label{fig:ablation3}
\end{figure}

\section{Conclusion}

We presented McCast, a memory-guided latent drift correction method for precipitation nowcasting. Unlike conventional memory-based methods that use history as passive context, McCast exploits temporally organized latent memory to estimate explicit correction signals through DCBank, thereby mitigating autoregressive drift and improving long-horizon temporal consistency. Experiments on two widely used benchmarks, SEVIR and MeteoNet, demonstrate the effectiveness of McCast, particularly in challenging long-horizon precipitation forecasting scenarios.

\bibliographystyle{unsrtnat}
\bibliography{references}

%%%%%%%%%%%%%%%%%%%%%%%%%%%%%%%%%%%%%%%%%%%%%%%%%%%%%%%%%%%%

\addtocontents{toc}{\protect\setcounter{tocdepth}{2}}
\newtheorem{proposition}{Proposition}
\newtheorem{remark}{Remark}

\newpage
\appendix
\section*{Appendix}

\tableofcontents
\newpage

\section{Broader Impact}
\label{appendix:broader_impact}
This work proposes McCast, a memory-based correction framework for precipitation nowcasting. By explicitly mitigating precipitation drift and improving sequential consistency, McCast can support more reliable short-term forecasts for heavy rainfall, flooding, and other weather-related hazards. More accurate and temporally coherent forecasts may benefit early warning systems, emergency response, transportation planning, and hydrological risk management. The proposed framework is also computationally practical, requiring moderate inference cost and avoiding excessive deployment overhead. This makes it potentially useful for real-time or near-real-time forecasting scenarios on commodity GPUs, which may improve accessibility for operational weather services.

We do not foresee direct negative societal impacts from McCast. However, its use in operational settings should be accompanied by careful regional validation, since forecasting errors or poorly calibrated confidence may affect decisions in safety-critical scenarios. McCast is therefore best viewed as a supportive forecasting component rather than a standalone decision system, and should be continuously evaluated and calibrated under local weather regimes before deployment.

\section{Limitations}
\label{appendix:limitaiton}
Although McCast improves temporal consistency through memory-guided drift correction, it still has several limitations. First, DCBank is designed for short-term autoregressive nowcasting, where recent historical states provide reliable physical anchors; its effectiveness under substantially longer forecasting horizons remains to be further investigated. Second, while McCast explicitly corrects latent drift, the correction is learned from data and does not yet enforce hard physical constraints such as mass conservation or explicit advection dynamics. Nevertheless, our goal is not to replace physics-based constraints, but to provide a lightweight latent-space correction mechanism that can be integrated with either data-driven or physics-informed backbones. Future work will explore more physically constrained memory updates and extend the framework to longer-range precipitation forecasting.

\section{Licenses for Existing Assets}
\label{appendix:licenses_for_existing_assets}

This work builds on several existing assets. We acknowledge their licenses and usage terms as follows:

\begin{itemize}
    \item \textbf{Model License.} Our proposed model will be released under the CC-BY 4.0 license upon publication.
    \item \textbf{Aurora Backbone.} The Aurora backbone used in our experiments is based on the publicly released implementation and pretrained checkpoints from the original work. We cite the corresponding paper in the main text and comply with the license terms specified by the authors.
    \item \textbf{SEVIR Dataset.} We use the publicly available SEVIR dataset, which is accessible through the AWS Open Data Registry: \url{https://registry.opendata.aws/sevir/}.
    \item \textbf{MeteoNet Dataset.} We use the MeteoNet dataset released by Météo-France, the French national meteorological service. The dataset is publicly available at \url{https://meteofrance.github.io/meteonet/}.

\end{itemize}

\section{Justifications}
\subsection{Justification for Active Drift Correction}
\label{appendix:justification_for_active_drift_correction}
Existing memory-based approaches often use historical memory as a passive conditioning signal,
for example by concatenating memory features with the current latent or injecting them through
cross-attention. However, passive conditioning does not explicitly specify how the current prediction should be adjusted. In contrast, active correction treats memory as a source of residual information and uses it to estimate an explicit correction term for the current prior latent features. The following provides a simple justification showing that an explicit residual correction form exposes a transparent sufficient condition for reducing latent drift error. This does not imply that passive conditioning cannot reduce error; rather, passive conditioning does not make the correction direction explicit and therefore does not by itself reveal such a criterion without further architectural or optimization assumptions.

\noindent\textbf{Problem Setup.}
Let $z_t^\star \in \mathbb{R}^d$ denote the target latent state at rollout step $t$, and let $z_t^- \in \mathbb{R}^d$ denote the prior latent predicted from input. The accumulated drift error of the prior is defined as
\begin{equation}
    e_t^- = z_t^- - z_t^\star .
\end{equation}

A passive memory-conditioning model uses historical memory $ \mathcal{M}_{<t}$ as auxiliary context:
\begin{equation}
    \hat{z}_t^{\mathrm{pass}} = g_\theta(z_t^-, \mathcal{M}_{<t}).
\end{equation}
Here, memory is provided to the model, but the model is not explicitly constrained to use memory as a residual correction to the current drift.

By contrast, an active correction model predicts an explicit correction term $\Delta_t$ from the current latent and memory:
\begin{equation}
    \Delta_t = h_\theta(z_t^-, \mathcal{M}_{<t}),
\end{equation}
and updates the prior latent as
\begin{equation}
    \hat{z}_t^{\mathrm{act}} = z_t^- + \Delta_t .
\end{equation}
Ideally, the correction term approximates the negative drift:
\begin{equation}
    \Delta_t \approx z_t^\star - z_t^- = -e_t^- .
\end{equation}

\noindent\textbf{Error Reduction Condition.}
We first show that active correction improves the prior whenever the predicted correction is sufficiently aligned with the negative drift direction.

\begin{proposition}[Sufficient condition for latent drift reduction]
Let $z_t^\star$ be the target latent state and $z_t^-$ be the autoregressive prior latent with drift error $e_t^- = z_t^- - z_t^\star$. Suppose an active correction model produces
\begin{equation}
    \hat{z}_t = z_t^- + \Delta_t .
\end{equation}
If the correction term satisfies: 
\begin{equation}
    \langle e_t^-, \Delta_t \rangle < -\frac{1}{2}\|\Delta_t\|_2^2,
    \label{eq:correction_condition}
\end{equation}
then the corrected latent has strictly smaller squared error than the prior:
\begin{equation}
    \|\hat{z}_t - z_t^\star\|_2^2
    <
    \|z_t^- - z_t^\star\|_2^2 .
\end{equation}
\end{proposition}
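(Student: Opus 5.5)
The plan is a direct expansion of the squared norm; no earlier result is needed beyond the definitions in the Problem Setup. First I would rewrite the post-correction error in terms of the prior drift. By definition $\hat{z}_t - z_t^\star = z_t^- + \Delta_t - z_t^\star = e_t^- + \Delta_t$. The quantity to control is therefore $\|e_t^- + \Delta_t\|_2^2$, and the claim is that it is strictly less than $\|e_t^-\|_2^2$.

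Next I would expand using bilinearity of the Euclidean inner product:
\begin{equation*}
\|e_t^- + \Delta_t\|_2^2 = \|e_t^-\|_2^2 + 2\langle e_t^-, \Delta_t\rangle + \|\Delta_t\|_2^2 .
\end{equation*}
The difference between the corrected error and the prior error is then exactly $2\langle e_t^-, \Delta_t\rangle + \|\Delta_t\|_2^2$. Multiplying condition \eqref{eq:correction_condition} by $2$ gives $2\langle e_t^-, \Delta_t\rangle + \|\Delta_t\|_2^2 < 0$. Substituting this into the expansion yields $\|\hat{z}_t - z_t^\star\|_2^2 < \|z_t^- - z_t^\star\|_2^2$, which is the claimed strict inequality.

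There is no real obstacle here; the only point needing care is the strictness. Condition \eqref{eq:correction_condition} is a strict inequality, so the conclusion is strict automatically, without any separate nondegeneracy hypothesis. The condition itself forces $\Delta_t \neq 0$: if $\Delta_t = 0$, both sides of \eqref{eq:correction_condition} equal $0$ and the strict inequality fails.

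I would close with a remark interpreting the condition. It is equivalent to $\|\Delta_t + e_t^-\|_2 < \|e_t^-\|_2$, meaning $\Delta_t$ lies in the open ball of radius $\|e_t^-\|_2$ centered at the ideal correction $-e_t^-$. In particular, it holds whenever $\Delta_t = -\alpha e_t^-$ with $\alpha \in (0,2)$ and $e_t^- \neq 0$. This ties the proposition back to the heuristic $\Delta_t \approx -e_t^-$ and to the role of the correction gate $\mathbf{G}_\text{corr}$ in preventing over-correction.
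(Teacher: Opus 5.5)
Your proof is correct and matches the paper's argument: rewrite the corrected error as $e_t^- + \Delta_t$, expand the squared norm, and double the condition to get $2\langle e_t^-, \Delta_t\rangle + \|\Delta_t\|_2^2 < 0$. Your closing remark is also correct and worth keeping, since it shows the condition is actually equivalent to the conclusion rather than merely sufficient for it.
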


\begin{proof}
Since $\hat{z}_t = z_t^- + \Delta_t$, the corrected error is
\begin{equation}
    \hat{z}_t - z_t^\star
    =
    z_t^- + \Delta_t - z_t^\star
    =
    e_t^- + \Delta_t .
\end{equation}
Expanding the squared norm gives
\begin{align}
    \|\hat{z}_t - z_t^\star\|_2^2
    &=
    \|e_t^- + \Delta_t\|_2^2 \\
    &=
    \|e_t^-\|_2^2
    +
    2\langle e_t^-, \Delta_t\rangle
    +
    \|\Delta_t\|_2^2 .
\end{align}
Under condition~\eqref{eq:correction_condition}, we have
\begin{equation}
    2\langle e_t^-, \Delta_t\rangle + \|\Delta_t\|_2^2 < 0.
\end{equation}
Therefore,
\begin{equation}
    \|\hat{z}_t - z_t^\star\|_2^2
    <
    \|e_t^-\|_2^2
    =
    \|z_t^- - z_t^\star\|_2^2 .
\end{equation}
Thus, the active correction strictly reduces the latent drift error.
\end{proof}

\noindent\textbf{Connection to McCast.}
Condition~\eqref{eq:correction_condition} should not be interpreted as an
arbitrary assumption on the learned update. Instead, it characterizes the desired
geometric property of an effective drift correction. In McCast, the autoregressive
prior latent $z_t^-$ may deviate from the target latent trajectory $z_t^\star$,
leading to the drift error $e_t^- = z_t^- - z_t^\star$. Therefore, a useful
correction should move the prior latent in the opposite direction of this drift,
namely toward $z_t^\star - z_t^- = -e_t^-$.

This is consistent with the design of DCBank. Rather than using memory only as
an auxiliary conditioning signal, DCBank explicitly predicts an additive residual
$\Delta_t$ and updates the latent state as
\begin{equation}
    \hat{z}_t = z_t^- + \Delta_t .
\end{equation}
Thus, the role of memory is made directional: it is used to estimate how the
current prior should be adjusted to counteract accumulated drift. The Corrective
Latent Extractor first estimates a coarse correction from the discrepancy between
the current prior latent and a reference latent, providing an initial residual
direction. The Correction-Aware Memory Retrieval module then refines this
residual by retrieving temporally organized historical states according to both
content relevance and drift consistency. As a result, the retrieved memory is not
only visually or semantically relevant to the current latent, but also informative
about the evolution direction needed to restore temporal consistency.

Under this design, Eq.~\eqref{eq:correction_condition} formalizes the property
that McCast aims to learn: the correction should be sufficiently aligned with the
negative drift direction. The term $\langle e_t^-, \Delta_t\rangle$ measures this
alignment. A negative value indicates that $\Delta_t$ points against the drift,
while the additional threshold $-\frac{1}{2}\|\Delta_t\|_2^2$ ensures that the
benefit from moving against the drift is larger than the error introduced by the
magnitude of the update. Therefore, when the learned correction satisfies this
condition, the corrected latent is guaranteed to have smaller squared error than
the autoregressive prior.

Importantly, the condition is not claimed to hold for arbitrary memory updates.
It is a sufficient condition that explains why an explicit residual correction
structure is more principled than passive conditioning. Passive conditioning may
learn to use memory beneficially, but its update direction is implicit and does
not directly expose such an error-reduction criterion. In contrast, McCast makes
the memory-induced update explicit, allowing the correction direction to be
interpreted and theoretically linked to drift reduction.

\noindent\textbf{Passive Conditioning Limitation.}
Passive memory conditioning can be written as
\begin{equation}
    \hat{z}_t^{\mathrm{pass}} = g_\theta(z_t^-, \mathcal{M}_{<t}).
\end{equation}
Although this formulation is expressive, it does not explicitly constrain the model to estimate the residual direction $z_t^\star - z_t^-$. In other words, the memory is used as auxiliary context, but the update direction is implicit.

To guarantee improvement over the prior, one would need
\begin{equation}
    \|\hat{z}_t^{\mathrm{pass}} - z_t^\star\|_2^2
    <
    \|z_t^- - z_t^\star\|_2^2 .
\end{equation}
However, this inequality does not follow from the passive-conditioning form alone. The function $g_\theta$ may learn to use memory beneficially in practice, but without an explicit residual structure or directional constraint, it does not provide a transparent sufficient condition for reducing drift.

By contrast, active correction decomposes the prediction into a prior plus a residual update:
\begin{equation}
    \hat{z}_t^{\mathrm{act}} = z_t^- + \Delta_t .
\end{equation}
This makes the role of memory identifiable: memory is used to estimate a correction direction. As shown above, if this correction is sufficiently aligned with the negative drift direction, error reduction is guaranteed.

\noindent\textbf{Implication for Memory-Based Nowcasting.}
The above analysis supports the design of active memory correction for precipitation nowcasting. Rather than using historical memory only as a passive conditioning signal, McCast uses memory to estimate how the current latent state should be adjusted. This provides two advantages.

First, the correction term gives memory an explicit functional role: compensating for accumulated autoregressive drift. Second, the update form directly links the learned correction to error reduction, since improvement is guaranteed when the correction is aligned with the negative drift direction. Therefore, active correction provides a more principled mechanism for long-horizon temporal consistency than passive memory conditioning.

\subsection{Justification for Imperfect Reference Latents}
The most recent memory entry is used as the reference latent $\mathbf{Z}_r^{\mathrm{ref}}$, but it is not assumed to be error-free. Rather, it provides a temporally adjacent anchor for estimating a coarse local discrepancy with the current prior. To mitigate potential drift in this reference, the correction-aware memory retrieval (CAMR) module refines the initial correction using longer-range historical memory retrieved by both content relevance and drift consistency.

\section{Experimental Details}
\label{appendix:experimental_details}

\subsection{Dataset Details}
\label{appendix:dataset_details}
We provide detailed descriptions of the two widely used precipitation nowcasting benchmarks, SEVIR~\cite{veillette2020sevir} and MeteoNet~\cite{larvor2020meteo}, including their spatial and temporal resolutions, data splits, preprocessing procedures, and evaluation settings. 

\textbf{SEVIR} is an annotated dataset that temporally and spatially aligns multiple meteorological products, including visible satellite imagery, infrared channels, NEXRAD radar mosaics of vertically integrated liquid (VIL), and ground-based lightning detections. Each event corresponds to a 4-hour sequence sampled every 5 minutes over a $384 \text{ km} \times 384 \text{ km}$ region across the continental United States, with radar data at a spatial resolution of 1 km. In this work, we use only the radar (VIL) modality and focus on precipitation events. Following AlphaPre~\cite{Lin_2025_CVPR}, the dataset is split into training, validation, and test sets using January 1, 2019 and June 1, 2019 as temporal cutoffs. Each sequence consists of 25 frames, where 5 observed frames (25 minutes) are used to predict the subsequent 20 frames (100 minutes). All inputs are downsampled to a spatial resolution of $128 \times 128$ to reduce computational cost. For evaluation, frames are rescaled to [0, 255] and binarized using thresholds [16, 74, 133, 160, 181, 219] to compute CSI and HSS.

\textbf{MeteoNet} is a multimodal dataset that includes time series of satellite and radar imagery, numerical weather prediction outputs, and ground observations. It covers a $550 \text{ km} \times 550 \text{ km}$ region over northwestern and southeastern France and spans three years (2016–2018), with radar reflectivity recorded every 5 minutes at a spatial resolution of $0.01^\circ$ (approximately 1.11 km). In this work, we use only the radar reflectivity data. Following AlphaPre~\cite{Lin_2025_CVPR}, the data are split into training, validation, and test sets using January 1, 2018 and June 1, 2018 as temporal cutoffs. All sequences are downsampled to $128 \times 128$ resolution for computational efficiency. For evaluation, we adopt thresholds [12, 18, 24, 32] for CSI and HSS, consistent with AlphaPre~\cite{Lin_2025_CVPR}.

\subsection{Baseline Details}
\label{appendix:baseline_details}

The baseline results reported in Table~\ref{tab:sota} are mainly taken from AlphaPre~\cite{Lin_2025_CVPR}, which provides a comprehensive comparison with representative precipitation nowcasting methods under the same evaluation protocol. For FACL~\cite{yan2024fourier} and PercpCast~\cite{fengperceptually}, we reproduce the results using their released papers and code implementations. All reproduced baselines are evaluated following the same data splits and metrics used for McCast whenever applicable.

\subsection{Backbone Details}
\label{appendix:backbone_details}

Inspired by the strong generalization capability of foundation models across diverse domains~\cite{shi2026tracerouter,dou2026dna}, we adopt a foundation-model backbone for precipitation nowcasting.
Aurora~\cite{bodnar2025foundation} is a large-scale 3D foundation model for Earth-system forecasting, pretrained on over one million hours of heterogeneous meteorological data. Architecturally, it follows an encoder-backbone-decoder design, consisting of a 3D Perceiver encoder, a multi-scale 3D Swin Transformer U-Net backbone~\cite{liu2022swin}, and a 3D Perceiver decoder. We adopt the Aurora $0.25^\circ$ Pretrained checkpoint, which contains approximately 1.3B parameters and operates on a global grid with $0.25^\circ$ spatial resolution, corresponding to an input size of $721{\times}1440$, and a 6-hour temporal interval. It ingests multiple meteorological variables, including surface-level variables, static variables, and atmospheric variables across multiple pressure levels, and generates the corresponding future predictions. This checkpoint is pretrained on multiple large-scale meteorological datasets, including ERA5~\cite{hersbach2023era5,hersbach2020era5}, HRES T0~\cite{rasp2024weatherbench}, and other heterogeneous weather data sources. In its default forecasting setup, Aurora uses two historical frames to predict the next future frame for each rollout.

Specifically, $\mathcal{E}$ is instantiated with Aurora's 3D Perceiver encoder, while $\mathcal{D}$ consists of the multi-scale 3D Swin Transformer U-Net followed by the 3D Perceiver decoder. Table~\ref{tab:aurora_mccast_architecture} provides a module-wise comparison between the original Aurora architecture and McCast. We initialize the model from the Aurora $0.25^\circ$ Pretrained checkpoint. To adapt Aurora to precipitation nowcasting, we treat the precipitation map as a new surface-level variable. Since our task uses only precipitation fields as input and predicts only future precipitation, the original input variables and output targets of Aurora are not directly applicable. Therefore, we train a task-specific input projector in the 3D Perceiver encoder and a task-specific output head in the 3D Perceiver decoder, following the original projector and head designs, while retaining the pretrained Transformer-based backbone for spatiotemporal modeling. Although our input resolution of $128{\times}128$ differs from Aurora's original global grid resolution of $721{\times}1440$, the Transformer-based architecture can naturally handle variable spatial token lengths after patch embedding, enabling adaptation to the precipitation nowcasting resolution. We follow Aurora’s default forecasting setup, using two historical frames to predict the next future frame for each rollout.

Additionally, the performance gains are not specific to the Aurora backbone. To verify the generality of DCBank, we conduct additional experiments with alternative backbones, with results reported in Appendix~\ref{effectiveness_of_dcbank_on_different_backbone}.

\begin{table}[!htbp]
\centering
\caption{
Module-wise summary of Aurora and McCast. $D$ and $H$ denote the number of blocks and attention heads, respectively. The highlighted row indicates the additional drift-corrective memory bank (DCBank) module introduced by McCast.
}
\label{tab:aurora_mccast_architecture}
\vspace{1mm}
\small
\setlength{\tabcolsep}{5.5pt}
\renewcommand{\arraystretch}{1.10}
\begin{tabular}{lllcc}
\toprule
Model & Component & Module & $D$ & $H$ \\
\midrule
\multirow{9}{*}{Aurora}
& Embedding 

& Patch embedding, $4{\times}4$, $d=512$ 
& -- & -- \\

\cmidrule(lr){2-5}

& Encoder 
& 3D Perceiver encoder
& 1 & -- \\

\cmidrule(lr){2-5}
& \multirow{6}{*}{Backbone}
& Swin encoder (block 1) 
& 6 & 8 \\

& 
& Swin encoder (block 2) 
& 10 & 16 \\

& 
& Swin encoder (block 3)
& 8 & 32 \\

& 
& Swin decoder (block 1) 
& 8 & 32 \\

& 
& Swin decoder (block 2) 
& 10 & 16 \\

& 
& Swin decoder (block 3) 
& 6 & 8 \\

\cmidrule(lr){2-5}
& Decoder 
& 3D Perceiver decoder 
& 1 & 16 \\

\midrule

\multirow{10}{*}{\textbf{McCast}}
& Embedding 
& Patch embedding, $4{\times}4$, $d=512$ 
& -- & -- \\

\cmidrule(lr){2-5}

& Encoder 
& 3D Perceiver encoder
& 1 & -- \\

\cmidrule(lr){2-5}
& \cellcolor{blue!8}\textbf{DCBank}
& \cellcolor{blue!8}\textbf{DriftAwareLightMemory}
& \cellcolor{blue!8}\textbf{--}
& \cellcolor{blue!8}\textbf{--} \\

\cmidrule(lr){2-5}
& \multirow{6}{*}{Backbone}
& Swin encoder (block 1) 
& 6 & 8 \\

& 
& Swin encoder (block 2) 
& 10 & 16 \\

& 
& Swin encoder (block 3) 
& 8 & 32 \\

& 
& Swin decoder (block 1) 
& 8 & 32 \\

& 
& Swin decoder (block 2) 
& 10 & 16 \\

& 
& Swin decoder (block 3) 
& 6 & 8 \\

\cmidrule(lr){2-5}
& Decoder 
& 3D Perceiver decoder
& 1 & 16 \\
\bottomrule
\end{tabular}
\end{table}

\subsection{Compute Resources}
\label{appendix:compute_resources}

All training experiments are conducted on three NVIDIA A6000 GPUs, each with 48GB of memory, using a batch size of 16. The training machine is equipped with an Intel(R) Xeon(R) Gold 6336Y CPU @ 2.40GHz. Training McCast requires approximately 30GB of GPU memory. All evaluation experiments are performed on a single NVIDIA RTX 4090 GPU with 24GB of memory. The evaluation machine is equipped with a 13th Gen Intel(R) Core(TM) i9-13900 CPU.

\subsection{Evaluation Metrics}
Following prior work~\cite{Lin_2025_CVPR}, we evaluate forecasting performance using the Critical Success Index (CSI), Heidke Skill Score (HSS), Learned Perceptual Image Patch Similarity (LPIPS), and Structural Similarity Index (SSIM). The formal definitions are provided below.

\noindent\textbf{Critical Success Index (CSI)} measures the overlap between predicted and ground-truth precipitation events while ignoring correct negatives. Let $\hat{\mathbf{Y}} \in \mathbb{R}^{T_o \times H \times W}$ and $\mathbf{Y} \in \mathbb{R}^{T_o \times H \times W}$ denote the predicted and ground-truth precipitation sequences, respectively. Given a precipitation threshold $\tau$, we first binarize each pixel as
\begin{equation}
\hat{\mathbf{B}}_{\tau} = \mathbb{I}(\hat{\mathbf{Y}} \ge \tau), 
\qquad
\mathbf{B}_{\tau} = \mathbb{I}(\mathbf{Y} \ge \tau),
\end{equation}
where $\mathbb{I}(\cdot)$ is the indicator function. The contingency statistics are then computed as
\begin{equation}
\mathrm{TP}_{\tau}
=
\sum_{t,h,w}
\mathbb{I}\!\left(\hat{\mathbf{B}}_{\tau}^{t,h,w}=1 \land \mathbf{B}_{\tau}^{t,h,w}=1\right),
\end{equation}
\begin{equation}
\mathrm{FN}_{\tau}
=
\sum_{t,h,w}
\mathbb{I}\!\left(\hat{\mathbf{B}}_{\tau}^{t,h,w}=0 \land \mathbf{B}_{\tau}^{t,h,w}=1\right),
\end{equation}
\begin{equation}
\mathrm{FP}_{\tau}
=
\sum_{t,h,w}
\mathbb{I}\!\left(\hat{\mathbf{B}}_{\tau}^{t,h,w}=1 \land \mathbf{B}_{\tau}^{t,h,w}=0\right).
\end{equation}
Here, $\mathrm{TP}_{\tau}$ denotes correctly predicted precipitation pixels, $\mathrm{FN}_{\tau}$ denotes missed precipitation pixels, and $\mathrm{FP}_{\tau}$ denotes falsely predicted precipitation pixels.

For a threshold $\tau$, CSI is defined as
\begin{equation}
\mathrm{CSI}_{\tau}
=
\frac{\mathrm{TP}_{\tau}}
{\mathrm{TP}_{\tau} + \mathrm{FN}_{\tau} + \mathrm{FP}_{\tau}}.
\end{equation}
A higher CSI indicates better precipitation event detection. Following common practice, we report CSI at multiple thresholds and use CSI$_{\mathrm{M}}$ to denote the mean CSI over all selected thresholds:
\begin{equation}
\mathrm{CSI}_{\mathrm{M}}
=
\frac{1}{|\mathcal{T}|}
\sum_{\tau \in \mathcal{T}}
\mathrm{CSI}_{\tau},
\end{equation}
where $\mathcal{T}$ is the set of evaluation thresholds.

\noindent\textbf{Heidke Skill Score (HSS)} evaluates the categorical forecasting skill while accounting for random chance. For a given threshold $\tau$, it is computed as

\begin{equation}
\mathrm{HSS}_{\tau}
=
\frac{
2(\mathrm{TP}_{\tau}\mathrm{TN}_{\tau} - \mathrm{FN}_{\tau}\mathrm{FP}_{\tau})
}{
(\mathrm{TP}_{\tau}+\mathrm{FN}_{\tau})(\mathrm{FN}_{\tau}+\mathrm{TN}_{\tau})
+
(\mathrm{TP}_{\tau}+\mathrm{FP}_{\tau})(\mathrm{FP}_{\tau}+\mathrm{TN}_{\tau})
}.
\end{equation}

We compute HSS over the same threshold set $\mathcal{T}$ used for CSI and report the averaged score:

\begin{equation}
\mathrm{HSS}
=
\frac{1}{|\mathcal{T}|}
\sum_{\tau \in \mathcal{T}}
\mathrm{HSS}_{\tau}.
\end{equation}

Higher HSS indicates stronger overall skill in distinguishing precipitation and non-precipitation regions across different rainfall intensities.

\noindent\textbf{Learned Perceptual Image Patch Similarity (LPIPS)} measures perceptual discrepancy between the predicted sequence $\hat{\mathbf{Y}}$ and the ground truth $\mathbf{Y}$ in a deep feature space. Given a pretrained feature extractor $\phi_l(\cdot)$ at layer $l$, LPIPS is generally written as

\begin{equation}
\mathrm{LPIPS}(\hat{\mathbf{Y}}, \mathbf{Y})
=
\sum_l
\frac{1}{H_l W_l}
\sum_{h,w}
\alpha_l \left\|
\left(
\phi_l(\hat{\mathbf{Y}})_{h,w}
-
\phi_l(\mathbf{Y})_{h,w}
\right)
\right\|_2^2,
\end{equation}

where $\alpha_l$ denotes the weighting coefficient associated with layer $l$. Lower LPIPS indicates better perceptual similarity and sharper reconstructed precipitation structures.

\noindent\textbf{Structural Similarity Index (SSIM)} measures structural similarity between the prediction and ground truth by comparing local luminance, contrast, and structure. For a predicted frame $\hat{\mathbf{Y}}_t$ and the corresponding ground-truth frame $\mathbf{Y}_t$, SSIM is defined as

\begin{equation}
\mathrm{SSIM}_t(\hat{\mathbf{Y}}_t, \mathbf{Y}_t)
=
\frac{
(2\mu_{\hat{\mathbf{Y}}_t}\mu_{\mathbf{Y}_t}+c_1)
(2\sigma_{\hat{\mathbf{Y}}_t\mathbf{Y}_t}+c_2)
}{
(\mu_{\hat{\mathbf{Y}}_t}^2+\mu_{\mathbf{Y}_t}^2+c_1)
(\sigma_{\hat{\mathbf{Y}}_t}^2+\sigma_{\mathbf{Y}_t}^2+c_2)
},
\end{equation}

where $\mu$, $\sigma^2$, and $\sigma_{\hat{\mathbf{Y}}_t\mathbf{Y}_t}$ denote the local mean, variance, and covariance, respectively, and $c_1,c_2$ are constants for numerical stability. We compute SSIM independently for each predicted frame and then average the frame-wise scores over the full forecasting horizon:

\begin{equation}
\mathrm{SSIM}
=
\frac{1}{T_o}
\sum_{t=1}^{T_o}
\mathrm{SSIM}_t(\hat{\mathbf{Y}}_t, \mathbf{Y}_t).
\end{equation}

This sequence-level average reflects the overall structural fidelity of the predicted precipitation evolution across all lead times.

\subsection{Training and Inference}
\label{appendix:training_inference}

During training, we unroll the autoregressive prediction process over the target horizon. For each training sequence, the memory bank is initialized as empty. At rollout step $r$, the model encodes the current context window, applies DCBank if historical memory is available, decodes the corrected latent state, and writes the posterior latent into memory. The next context window is then constructed by sliding the window forward and appending the newest prediction. The forecasting loss is accumulated over all rollout steps. In our implementation, the drift correction module is trained end-to-end through the rollout forecasting objective. Since memory information can only affect the decoder through the gated residual correction, the model is encouraged to learn corrections that improve long-horizon prediction rather than arbitrary memory fusion.

Inference follows the same autoregressive procedure as training. The memory bank is also initialized empty and updated online using only previously generated posterior latent states. Since DCBank retrieves only from $\mathcal{M}_{<r}$ at step $r$, the inference process does not use future frames or ground-truth future latents.

\section{Model Complexity}
\label{appendix:model_complexity}

Table~\ref{tab:complexities} compares McCast with recent baselines in terms of parameter count and inference-time complexity on SEVIR for 100-minute forecasting, evaluated on a single NVIDIA RTX 4090 GPU. McCast has the smallest parameter count among the compared models and requires only 3.80 TFLOPs, making it computationally efficient and practical for modern commodity GPUs.

Compared with DiffCast, McCast uses fewer parameters and substantially lower computational cost, indicating better inference efficiency. Relative to AlphaPre, McCast has a smaller parameter scale with only a slightly higher FLOPs cost. This cost remains well within the capability of modern hardware: for example, an RTX 4090 provides up to 82.6 TFLOPs in FP32 and 330.4 TFLOPs in FP16, while McCast requires only 3.80 TFLOPs. 

In addition to FLOPs, McCast also demonstrates competitive runtime efficiency. On the same hardware, it takes 0.20 seconds per sample to generate 20 forecast frames, corresponding to the 100-minute SEVIR horizon, whereas DiffCast requires 3.67 seconds per sample. These results show that McCast achieves a favorable balance between forecasting performance, model capacity, and practical efficiency for real-time or near-real-time precipitation nowcasting.

\begin{table}[!htbp]
\centering
\caption{Model complexity comparison with state-of-the-art precipitation nowcasting methods. 
The number of parameters and computational cost are reported in millions and TFLOPs, respectively.}
\label{tab:complexities}
\vspace{1mm}
\small
\setlength{\tabcolsep}{8pt}
\renewcommand{\arraystretch}{1.08}
\begin{tabular}{lccc}
\toprule
Method & Params (M) $\downarrow$ & TFLOPs $\downarrow$ & Year \\
\midrule
ConvGRU~\cite{shi2017deep}      & 18.21  & 0.03  & 2017 \\
PhyDNet~\cite{guen2020disentangling}      & 11.80  & 0.08  & 2020 \\
MAU~\cite{chang2021mau}          & 20.13  & 0.09  & 2021 \\
STRPM~\cite{chang2022strpm}        & 439.63 & 0.28  & 2022 \\
SimVP~\cite{gao2022simvp}        & 44.25  & 0.05  & 2022 \\
Earthformer~\cite{gao2022earthformer}  & 34.61  & 0.04  & 2022 \\
PreDiff~\cite{gao2024prediff}      & 135.24 & 2.80  & 2023 \\
DiffCast~\cite{yu2024diffcast}     & 58.33  & 72.49 & 2024 \\
FACL~\cite{yan2024fourier}         & 14.38  & 0.02  & 2024 \\
AlphaPre~\cite{Lin_2025_CVPR}     & 89.03  & 1.56  & 2025 \\
PercpCast~\cite{fengperceptually}    & 55.87  & 1.23  & 2025 \\
\midrule
\rowcolor{blue!8}
\textbf{McCast} (LoRA r=8) & \textbf{10.24} & 3.80 & 2026 \\
\bottomrule
\end{tabular}
\end{table}

\section{Additional Ablation Study}
\label{appendix:additional_ablation_study}

\subsection{Advantages of Weather Foundation Models}
We use Aurora~\cite{bodnar2025foundation} as the backbone of our framework, motivated by the strong transferability of recent weather foundation models. As shown in Table~\ref{tab:ablation_aurora}, Aurora, when adapted to precipitation nowcasting using LoRA, achieves competitive performance compared with the state-of-the-art, such as AlphaPre~\cite{Lin_2025_CVPR}. In particular, it improves CSI under high-intensity precipitation thresholds, as well as HSS and LPIPS, while maintaining comparable CSI$_{\mathrm{M}}$. Moreover, LoRA finetuning enables Aurora to achieve strong performance with only about $8\%$ of the trainable parameters used by AlphaPre. These results suggest that Aurora provides a strong representation prior for precipitation evolution, making it a suitable backbone for our proposed memory-based drift correction framework.

\begin{table}[!htbp]
\centering
\caption{
Quantitative Performance on SEVIR. Best results are highlighted in \textbf{bold}.
}
\label{tab:ablation_aurora}
\vspace{1mm}
\small
\setlength{\tabcolsep}{3.5pt}
\renewcommand{\arraystretch}{1.08}
\begin{tabular}{llccccccc}
\toprule
\multirow{2}{*}{Model} 
& \multirow{2}{*}{Finetune Method}
& \multirow{2}{*}{Train Params (M)}
& \multicolumn{6}{c}{SEVIR} \\
\cmidrule(lr){4-9}
& 
& 
& CSI$_\text{M}$$\uparrow$ 
& CSI$_\text{181}$$\uparrow$
& CSI$_\text{219}$$\uparrow$
& HSS$\uparrow$
& LPIPS$\downarrow$
& SSIM$\uparrow$ \\
\midrule

PercpCast  
& Full finetune 
& 55.87 
& 0.320 & 0.136 & 0.059 & 0.403 & 0.206 & 0.665 \\

AlphaPre  
& Full finetune 
& 89.03 
& 0.326 & 0.133 & 0.055 & 0.411 & 0.286 & \underline{0.688} \\

\midrule
\multirow{3}{*}{Aurora}
& LoRA (r=8)
& 6.56
& 0.314 & 0.143 & 0.072 & 0.415 & 0.285 & 0.668 \\

& Full finetune
& 1243.13
& 0.330 & 0.162 & 0.086 & 0.429 & 0.249 & 0.674 \\

& Full finetune (scratch)
& 1243.13
& 0.324 & 0.156 & 0.077 & 0.420 & 0.253 & 0.670 \\

\midrule
\rowcolor{blue!8}
\textbf{McCast} 
& LoRA (r=8)
& \textbf{10.24}
& \underline{0.339} 
& \underline{0.172} 
& \underline{0.107} 
& \underline{0.438} 
& \underline{0.196} 
& 0.669 \\

\rowcolor{blue!8}
\textbf{McCast} 
& Full finetune
& 1243.13
& \textbf{0.349} 
& \textbf{0.193} 
& \textbf{0.112} 
& \textbf{0.449} 
& \textbf{0.177} 
& \textbf{0.700}\\
\bottomrule
\end{tabular}
\end{table}

\subsection{Effectiveness of the Memory Mechanism}
Complementing the quantitative ablation in Section~\ref{ablation_effectiness_of_the_memory_mechanism}, Figure~\ref{fig:ablation1} provides a qualitative comparison of McCast with and without DCBank. The model equipped with DCBank better preserves fine-grained precipitation structures and high-intensity rainfall cores over long lead times, particularly in the highlighted regions. In contrast, removing DCBank leads to blurred boundaries and weakened intense responses, indicating that the memory mechanism helps mitigate autoregressive drift and maintain temporal consistency.

\begin{figure}[!htbp]
  \centering
   \includegraphics[width=\linewidth]{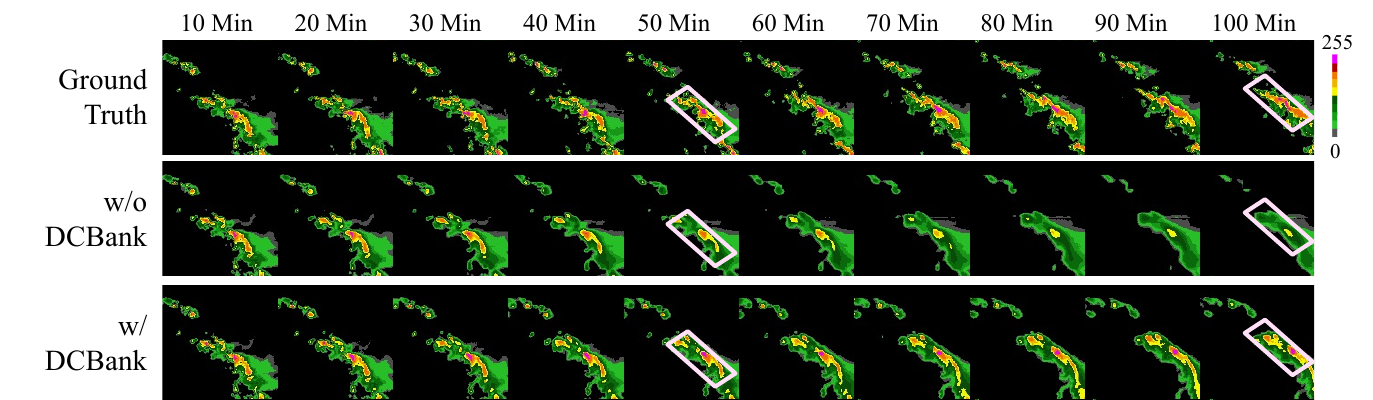}
   \caption{Qualitative comparison on a SEVIR event with and without DCBank. DCBank better preserves fine-grained precipitation structures, especially high-intensity regions.}
   \label{fig:ablation1}
\end{figure}

\subsection{Effectiveness of Content-based Retrieval}
Table~\ref{tab:ablation_retrieval} verifies the effectiveness of content-based retrieval in the Correction-Aware Memory Retrieval module. Removing this component consistently degrades performance across all metrics, with CSI$_{\mathrm{M}}$ dropping from $0.339$ to $0.328$ and high-intensity CSI$_{219}$ decreasing from $0.107$ to $0.090$. This suggests that content similarity is critical for selecting historical states that are relevant to the current precipitation pattern, especially under intense rainfall conditions. By incorporating content-based matching, CAMR retrieves memory states that are both temporally informative and semantically aligned with the current latent representation, thereby providing more effective correction cues for long-horizon precipitation nowcasting.

\begin{table}[!htbp]
\centering
\caption{
Ablation study on SEVIR evaluating the effect of content-based retrieval.
}
\label{tab:ablation_retrieval}
\vspace{1mm}
\small
\setlength{\tabcolsep}{4.5pt}
\renewcommand{\arraystretch}{1.08}
\begin{tabular}{lcccccc}
\toprule
\multirow{2}{*}{Model} 
& \multicolumn{6}{c}{SEVIR} \\
\cmidrule(lr){2-7}
& CSI$_{\mathrm{M}}\uparrow$ 
& CSI$_{181}\uparrow$
& CSI$_{219}\uparrow$
& HSS$\uparrow$
& LPIPS$\downarrow$
& SSIM$\uparrow$ \\
\midrule
w/o content-based retrieval
& 0.328& 0.157 & 0.090 & 0.424 & 0.226 & 0.665 \\

\midrule
\rowcolor{blue!8}
\textbf{McCast} 
& \textbf{0.339} 
& \textbf{0.172} 
& \textbf{0.107} 
& \textbf{0.438} 
& \textbf{0.196} 
& 0.669 \\
\bottomrule
\end{tabular}
\end{table}

\subsection{Effectiveness of DCBank on Different Backbone}
\label{effectiveness_of_dcbank_on_different_backbone}
To further evaluate the generality of DCBank, we integrate it into SimVP~\cite{gao2022simvp}, a representative encoder–hidden–decoder architecture. Specifically, we insert DCBank between the encoder and hidden layers, enabling drift correction in the latent space without modifying the overall prediction pipeline. As reported in Table~\ref{tab:dcbank_on_simvp}, adding DCBank consistently improves the forecasting performance of SimVP, demonstrating that the proposed memory-guided correction mechanism is not tied to a specific backbone. Figure~\ref{fig:simvp_ablation} further provides qualitative evidence. In the highlighted pink frames, SimVP with DCBank better preserves temporally consistent precipitation structures, especially for medium- and high-intensity rainfall, whereas the vanilla SimVP tends to produce weaker and less coherent forecasts.

\begin{table}[!htbp]
\centering
\caption{
Ablation study of DCBank integrated into SimVP on the SEVIR dataset.
}
\label{tab:dcbank_on_simvp}
\vspace{1mm}
\small
\setlength{\tabcolsep}{6.5 pt}
\renewcommand{\arraystretch}{1.08}
\begin{tabular}{lcccccc}
\toprule
\multirow{2}{*}{Model} 
& \multicolumn{6}{c}{SEVIR} \\
\cmidrule(lr){2-7}
& CSI$_{\mathrm{M}}\uparrow$ 
& CSI$_{181}\uparrow$
& CSI$_{219}\uparrow$
& HSS$\uparrow$
& LPIPS$\downarrow$
& SSIM$\uparrow$ \\
\midrule
SimVP  & 0.311  &  0.111  &  0.052  &  0.392  &  0.389  &  0.651  \\

\midrule
\rowcolor{blue!8}
SimVP + \textbf{DCBank}
& \textbf{0.332} 
& \textbf{0.137} 
& \textbf{0.063} 
& \textbf{0.416} 
& \textbf{0.343} 
& \textbf{0.681} \\
\bottomrule
\end{tabular}
\end{table}

\begin{figure}[!htbp]
  \centering
   \includegraphics[width=\linewidth]{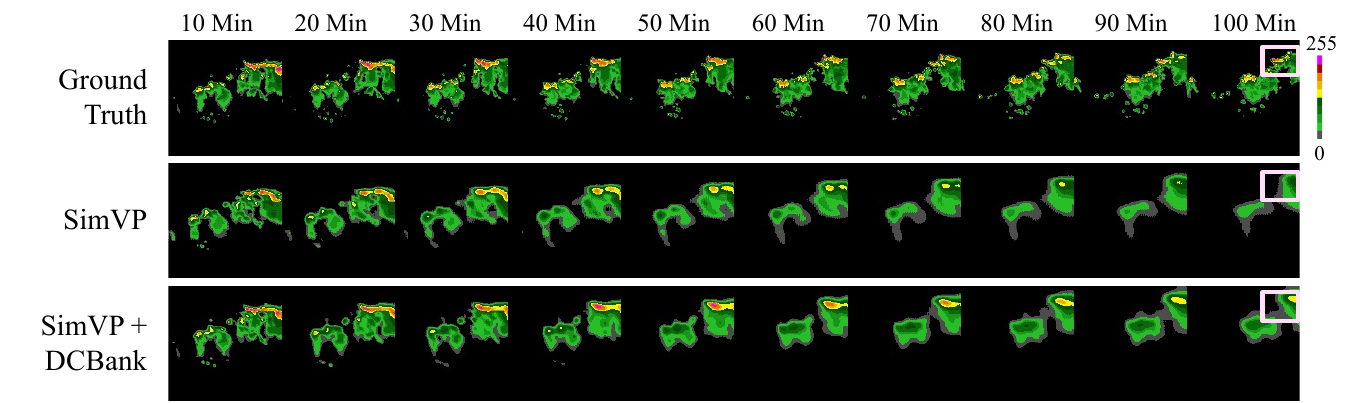}
   \caption{Qualitative comparison of SimVP with and without DCBank on representative SEVIR examples.}
   \label{fig:simvp_ablation}
\end{figure}

\subsection{Hyper-Parameter Sensitivity to $\lambda_\text{drift}$}

We evaluate the sensitivity of McCast to the drift-consistency weight $\lambda_{\text{drift}}$ on the SEVIR dataset. As shown in Table~\ref{tab:hyperparameter_lambda_drift}, McCast achieves the best overall performance when $\lambda_{\text{drift}}=0.3$, obtaining the highest CSI$_{\mathrm{M}}$, CSI$_{181}$, CSI$_{219}$, HSS, and SSIM, as well as the lowest LPIPS. This indicates that a moderate drift-consistency weight provides an effective balance between semantic relevance and temporal drift consistency during memory retrieval. When $\lambda_{\text{drift}}$ is reduced to $0.1$, the performance slightly decreases, suggesting that weak drift-consistency guidance is insufficient to fully exploit temporally coherent historical states. Conversely, increasing $\lambda_{\text{drift}}$ to $0.5$ also leads to a mild degradation, especially in LPIPS, indicating that over-emphasizing drift consistency may weaken semantic matching and introduce less suitable memory retrieval. Overall, the results show that McCast is reasonably stable across different settings, while $\lambda_{\text{drift}}=0.3$ provides the best trade-off and is therefore used as the default configuration.

\begin{table}[!htbp]
\centering
\caption{
Sensitivity analysis of McCast to the drift-consistency weight $\lambda_{\text{drift}}$ on the SEVIR dataset.
}
\label{tab:hyperparameter_lambda_drift}
\vspace{1mm}
\small
\setlength{\tabcolsep}{5pt}
\renewcommand{\arraystretch}{1.08}
\begin{tabular}{lcccccc}
\toprule
\multirow{2}{*}{Model} 
& \multicolumn{6}{c}{SEVIR} \\
\cmidrule(lr){2-7}
& CSI$_{\mathrm{M}}\uparrow$ 
& CSI$_{181}\uparrow$
& CSI$_{219}\uparrow$
& HSS$\uparrow$
& LPIPS$\downarrow$
& SSIM$\uparrow$ \\
\midrule
McCast $\lambda_\text{drift}$ = 0.1 & 0.331& 0.166 & 0.104 & 0.430 & 0.205 & 0.659 \\

McCast $\lambda_\text{drift}$ = 0.3
& \textbf{0.339} 
& \textbf{0.172} 
& \textbf{0.107} 
& \textbf{0.438} 
& \textbf{0.196} 
& \textbf{0.669} \\

McCast $\lambda_\text{drift}$ = 0.5 & 0.335& 0.167 & 0.102 & 0.433 & 0.217 & 0.664 \\

\bottomrule
\end{tabular}
\end{table}

\subsection{Challenging Forecasting Setting}
To further evaluate the robustness of McCast, we conduct experiments under a more challenging long-horizon forecasting setting on SEVIR, where only two historical frames are used to predict the next 20 frames, i.e., $T_i=2$ and $T_o=20$. This setting provides substantially less observational context than the standard protocol, making the model more vulnerable to error accumulation and autoregressive drift.

As shown in Table~\ref{tab:quantitative_2_20}, McCast achieves the best performance on most metrics, including CSI$_{\mathrm{M}}$, CSI$_{181}$, CSI$_{219}$, HSS, and LPIPS. In particular, McCast shows clear improvements at high precipitation thresholds, demonstrating its advantage in preserving intense rainfall structures under limited input context. Although AlphaPre obtains a slightly higher SSIM, McCast achieves substantially better precipitation-oriented scores and perceptual quality, indicating more accurate and reliable forecasts for meteorological nowcasting. These results suggest that the proposed memory-guided drift correction is especially beneficial in challenging long-horizon scenarios where accumulated prediction errors are more severe.

\begin{table}[!htbp]
\centering
\caption{
Quantitative comparison under the challenging $T_i=2$, $T_o=20$ forecasting setting on the SEVIR dataset.
}
\label{tab:quantitative_2_20}
\vspace{1mm}
\small
\setlength{\tabcolsep}{7.5 pt}
\renewcommand{\arraystretch}{1.08}
\begin{tabular}{lcccccc}
\toprule
\multirow{2}{*}{Model} 
& \multicolumn{6}{c}{SEVIR} \\
\cmidrule(lr){2-7}
& CSI$_{\mathrm{M}}\uparrow$ 
& CSI$_{181}\uparrow$
& CSI$_{219}\uparrow$
& HSS$\uparrow$
& LPIPS$\downarrow$
& SSIM$\uparrow$ \\
\midrule
DiffCast  & 0.298  &  0.117  &  0.055  &  0.398  &  0.247  &  0.651 \\

AlphaPre  & \underline{0.323} &  0.125  &  0.057 &  \underline{0.410}  &  0.290  &  \textbf{0.681} \\

PercpCast  & 0.318  &  \underline{0.133}  &  \underline{0.060} &  0.401  &  \underline{0.211}  &  0.668  \\

\midrule
\rowcolor{blue!8}
\textbf{McCast}
& \textbf{0.338} 
& \textbf{0.172} 
& \textbf{0.098} 
& \textbf{0.437} 
& \textbf{0.199} 
& \underline{0.669} \\
\bottomrule
\end{tabular}
\end{table}

\section{Additional Qualitative Results}
\label{appendix:additonal_qualitative_results}

We present additional qualitative results on SEVIR and MeteoNet in Figures~\ref{fig:all_sevir} and~\ref{fig:all_meteo}. Compared with recent baselines, including PercpCast~\cite{fengperceptually} and AlphaPre~\cite{Lin_2025_CVPR}, McCast generates forecasts with clearer precipitation boundaries and better preservation of medium-intensity regions, particularly at longer lead times. When compared with DiffCast~\cite{yu2024diffcast}, both methods capture the overall precipitation structure in the early forecasting stages. However, as the prediction horizon increases, DiffCast tends to suffer from spatial deformation and weakened structural consistency. In contrast, McCast maintains more stable precipitation evolution and preserves fine-scale details over extended horizons.

\begin{figure}[!htbp]
  \centering
   \includegraphics[width=\linewidth]{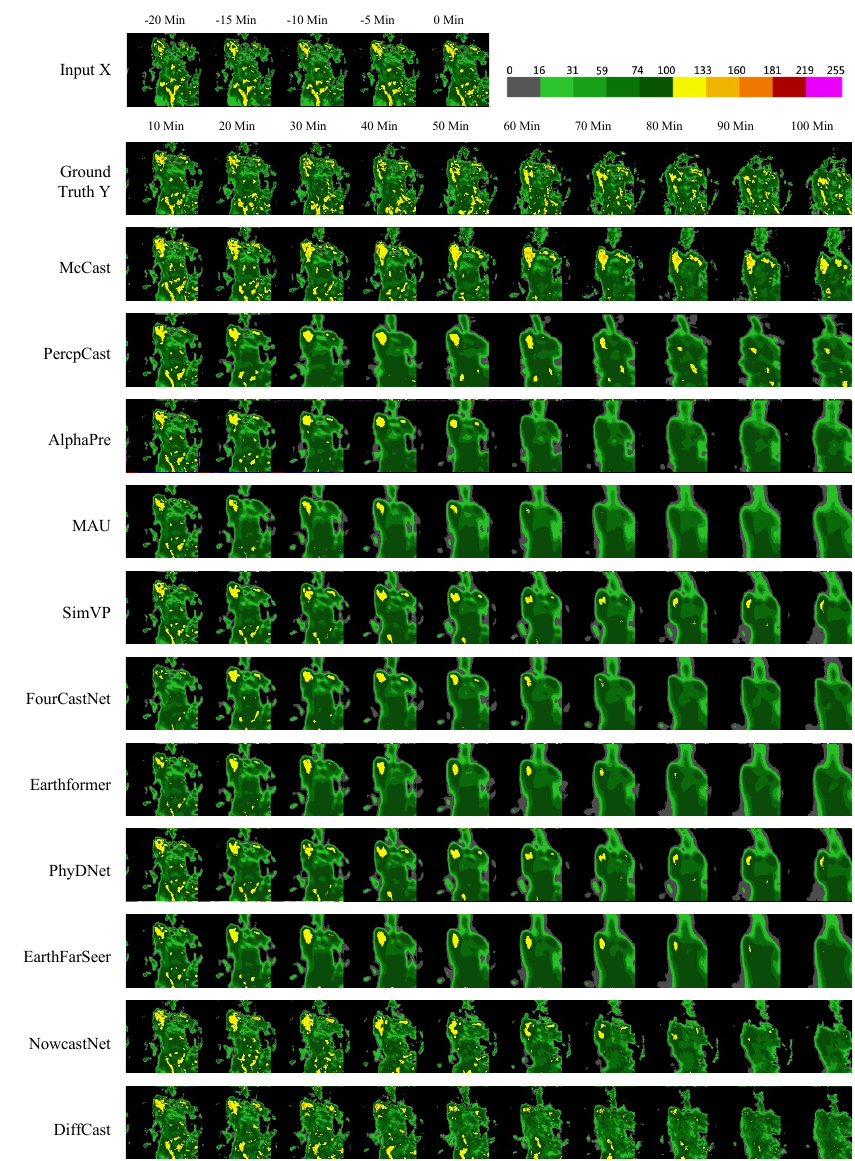}
   \caption{Prediction examples on the SEVIR dataset.}
   \label{fig:all_sevir}
\end{figure}

\begin{figure}[!htbp]
  \centering
   \includegraphics[width=\linewidth]{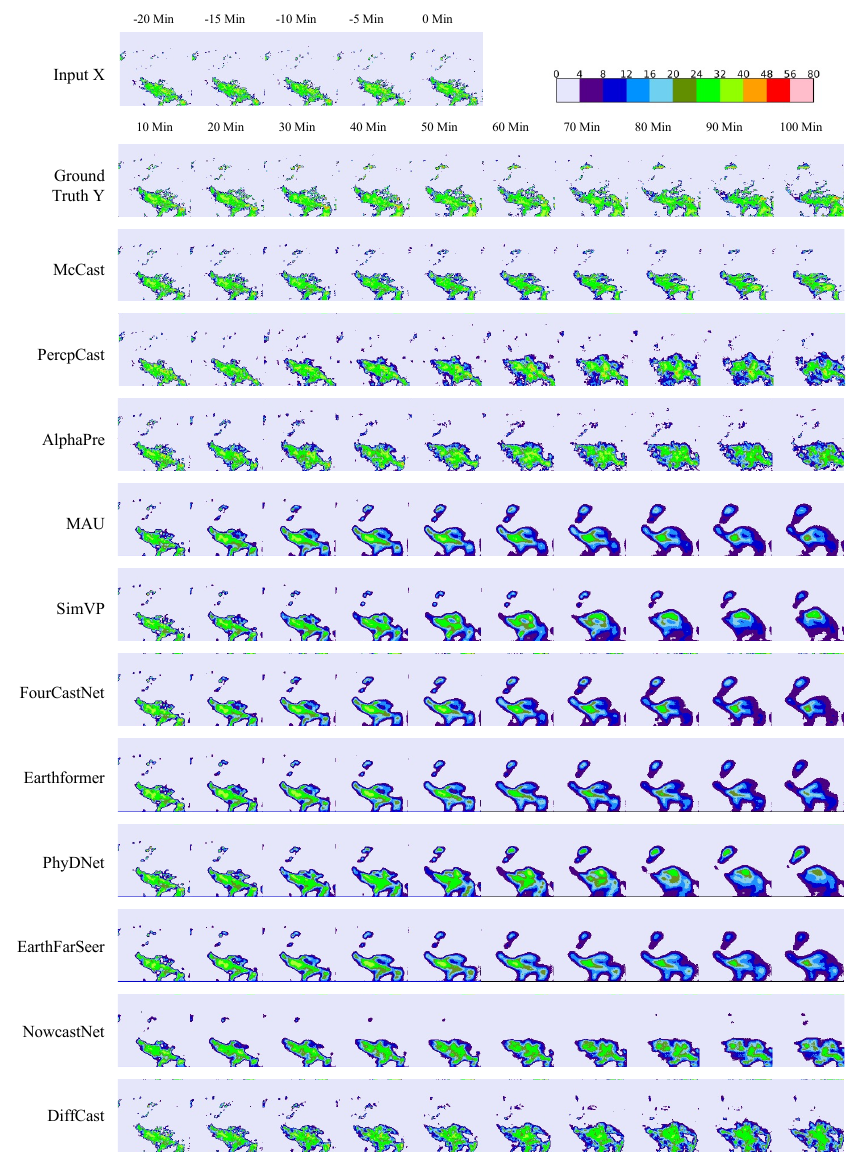}
   \caption{Prediction examples on the MeteoNet dataset.}
   \label{fig:all_meteo}
\end{figure}

%%%%%%%%%%%%%%%%%%%%%%%%%%%%%%%%%%%%%%%%%%%%%%%%%%%%%%%%%%%%

\newpage
\section*{NeurIPS Paper Checklist}

\begin{enumerate}

\item {\bf Claims}
    \item[] Question: Do the main claims made in the abstract and introduction accurately reflect the paper's contributions and scope?
    \item[] Answer: \answerYes{} % Replace by \answerYes{}, \answerNo{}, or \answerNA{}.
    \item[] Justification: The abstract and introduction clearly outline the core contributions of the paper: We propose McCast framework for precipitation nowcasting that explicitly leverages the sequential dependencies among historical states to correct autoregressive drift.
    \item[] Guidelines:
    \begin{itemize}
        \item The answer \answerNA{} means that the abstract and introduction do not include the claims made in the paper.
        \item The abstract and/or introduction should clearly state the claims made, including the contributions made in the paper and important assumptions and limitations. A \answerNo{} or \answerNA{} answer to this question will not be perceived well by the reviewers. 
        \item The claims made should match theoretical and experimental results, and reflect how much the results can be expected to generalize to other settings. 
        \item It is fine to include aspirational goals as motivation as long as it is clear that these goals are not attained by the paper. 
    \end{itemize}

\item {\bf Limitations}
    \item[] Question: Does the paper discuss the limitations of the work performed by the authors?
    \item[] Answer: \answerYes{}  % Replace by \answerYes{}, \answerNo{}, or \answerNA{}.
    \item[] Justification: We include a dedicated discussion of the paper’s limitations in the Appendix~\ref{appendix:limitaiton}
    \item[] Guidelines:
    \begin{itemize}
        \item The answer \answerNA{} means that the paper has no limitation while the answer \answerNo{} means that the paper has limitations, but those are not discussed in the paper. 
        \item The authors are encouraged to create a separate ``Limitations'' section in their paper.
        \item The paper should point out any strong assumptions and how robust the results are to violations of these assumptions (e.g., independence assumptions, noiseless settings, model well-specification, asymptotic approximations only holding locally). The authors should reflect on how these assumptions might be violated in practice and what the implications would be.
        \item The authors should reflect on the scope of the claims made, e.g., if the approach was only tested on a few datasets or with a few runs. In general, empirical results often depend on implicit assumptions, which should be articulated.
        \item The authors should reflect on the factors that influence the performance of the approach. For example, a facial recognition algorithm may perform poorly when image resolution is low or images are taken in low lighting. Or a speech-to-text system might not be used reliably to provide closed captions for online lectures because it fails to handle technical jargon.
        \item The authors should discuss the computational efficiency of the proposed algorithms and how they scale with dataset size.
        \item If applicable, the authors should discuss possible limitations of their approach to address problems of privacy and fairness.
        \item While the authors might fear that complete honesty about limitations might be used by reviewers as grounds for rejection, a worse outcome might be that reviewers discover limitations that aren't acknowledged in the paper. The authors should use their best judgment and recognize that individual actions in favor of transparency play an important role in developing norms that preserve the integrity of the community. Reviewers will be specifically instructed to not penalize honesty concerning limitations.
    \end{itemize}

\item {\bf Theory assumptions and proofs}
    \item[] Question: For each theoretical result, does the paper provide the full set of assumptions and a complete (and correct) proof?
    \item[] Answer: \answerNA{} % Replace by \answerYes{}, \answerNo{}, or \answerNA{}.
    \item[] Justification: The paper does not present theoretical results.
    \item[] Guidelines:
    \begin{itemize}
        \item The answer \answerNA{} means that the paper does not include theoretical results. 
        \item All the theorems, formulas, and proofs in the paper should be numbered and cross-referenced.
        \item All assumptions should be clearly stated or referenced in the statement of any theorems.
        \item The proofs can either appear in the main paper or the supplemental material, but if they appear in the supplemental material, the authors are encouraged to provide a short proof sketch to provide intuition. 
        \item Inversely, any informal proof provided in the core of the paper should be complemented by formal proofs provided in appendix or supplemental material.
        \item Theorems and Lemmas that the proof relies upon should be properly referenced. 
    \end{itemize}

    \item {\bf Experimental result reproducibility}
    \item[] Question: Does the paper fully disclose all the information needed to reproduce the main experimental results of the paper to the extent that it affects the main claims and/or conclusions of the paper (regardless of whether the code and data are provided or not)?
    \item[] Answer: \answerYes{} % Replace by \answerYes{}, \answerNo{}, or \answerNA{}.
    \item[] Justification: Our method includes a detailed description of the model architecture in the Section~\ref{section:methodology}, specifies the data sources used in the Section~\ref{section:experiments_discussions}, and provides implementation details in the section~\ref{section:experiments_discussions} and Appendix~\ref{appendix:experimental_details}.
    \item[] Guidelines:
    \begin{itemize}
        \item The answer \answerNA{} means that the paper does not include experiments.
        \item If the paper includes experiments, a \answerNo{} answer to this question will not be perceived well by the reviewers: Making the paper reproducible is important, regardless of whether the code and data are provided or not.
        \item If the contribution is a dataset and\slash or model, the authors should describe the steps taken to make their results reproducible or verifiable. 
        \item Depending on the contribution, reproducibility can be accomplished in various ways. For example, if the contribution is a novel architecture, describing the architecture fully might suffice, or if the contribution is a specific model and empirical evaluation, it may be necessary to either make it possible for others to replicate the model with the same dataset, or provide access to the model. In general. releasing code and data is often one good way to accomplish this, but reproducibility can also be provided via detailed instructions for how to replicate the results, access to a hosted model (e.g., in the case of a large language model), releasing of a model checkpoint, or other means that are appropriate to the research performed.
        \item While NeurIPS does not require releasing code, the conference does require all submissions to provide some reasonable avenue for reproducibility, which may depend on the nature of the contribution. For example
        \begin{enumerate}
            \item If the contribution is primarily a new algorithm, the paper should make it clear how to reproduce that algorithm.
            \item If the contribution is primarily a new model architecture, the paper should describe the architecture clearly and fully.
            \item If the contribution is a new model (e.g., a large language model), then there should either be a way to access this model for reproducing the results or a way to reproduce the model (e.g., with an open-source dataset or instructions for how to construct the dataset).
            \item We recognize that reproducibility may be tricky in some cases, in which case authors are welcome to describe the particular way they provide for reproducibility. In the case of closed-source models, it may be that access to the model is limited in some way (e.g., to registered users), but it should be possible for other researchers to have some path to reproducing or verifying the results.
        \end{enumerate}
    \end{itemize}

\item {\bf Open access to data and code}
    \item[] Question: Does the paper provide open access to the data and code, with sufficient instructions to faithfully reproduce the main experimental results, as described in supplemental material?
    \item[] Answer: \answerYes{} % Replace by \answerYes{}, \answerNo{}, or \answerNA{}.
    \item[] Justification: We provide the data sources used in the Section~\ref{section:experiments_discussions} and Appendix~\ref{appendix:dataset_details}, and we will release the code upon paper acceptance.
    \item[] Guidelines:
    \begin{itemize}
        \item The answer \answerNA{} means that paper does not include experiments requiring code.
        \item Please see the NeurIPS code and data submission guidelines (\url{https://neurips.cc/public/guides/CodeSubmissionPolicy}) for more details.
        \item While we encourage the release of code and data, we understand that this might not be possible, so \answerNo{} is an acceptable answer. Papers cannot be rejected simply for not including code, unless this is central to the contribution (e.g., for a new open-source benchmark).
        \item The instructions should contain the exact command and environment needed to run to reproduce the results. See the NeurIPS code and data submission guidelines (\url{https://neurips.cc/public/guides/CodeSubmissionPolicy}) for more details.
        \item The authors should provide instructions on data access and preparation, including how to access the raw data, preprocessed data, intermediate data, and generated data, etc.
        \item The authors should provide scripts to reproduce all experimental results for the new proposed method and baselines. If only a subset of experiments are reproducible, they should state which ones are omitted from the script and why.
        \item At submission time, to preserve anonymity, the authors should release anonymized versions (if applicable).
        \item Providing as much information as possible in supplemental material (appended to the paper) is recommended, but including URLs to data and code is permitted.
    \end{itemize}

\item {\bf Experimental setting/details}
    \item[] Question: Does the paper specify all the training and test details (e.g., data splits, hyperparameters, how they were chosen, type of optimizer) necessary to understand the results?
    \item[] Answer: \answerYes{} % Replace by \answerYes{}, \answerNo{}, or \answerNA{}.
    \item[] Justification: We provide detailed experimental settings and configurations in the Section~\ref{section:experiments_discussions} and Appendix~\ref{appendix:experimental_details}
    \item[] Guidelines:
    \begin{itemize}
        \item The answer \answerNA{} means that the paper does not include experiments.
        \item The experimental setting should be presented in the core of the paper to a level of detail that is necessary to appreciate the results and make sense of them.
        \item The full details can be provided either with the code, in appendix, or as supplemental material.
    \end{itemize}

\item {\bf Experiment statistical significance}
    \item[] Question: Does the paper report error bars suitably and correctly defined or other appropriate information about the statistical significance of the experiments?
    \item[] Answer: \answerNo{} % Replace by \answerYes{}, \answerNo{}, or \answerNA{}.
    \item[] Justification: Reporting formal error bars is computationally expensive for precipitation nowcasting and is not commonly adopted in this field. Although we do not include statistical error bars, we conducted multiple independent runs and observed consistent convergence behavior with small variations across runs, suggesting that the reported performance is stable and reliable.
    \item[] Guidelines:
    \begin{itemize}
        \item The answer \answerNA{} means that the paper does not include experiments.
        \item The authors should answer \answerYes{} if the results are accompanied by error bars, confidence intervals, or statistical significance tests, at least for the experiments that support the main claims of the paper.
        \item The factors of variability that the error bars are capturing should be clearly stated (for example, train/test split, initialization, random drawing of some parameter, or overall run with given experimental conditions).
        \item The method for calculating the error bars should be explained (closed form formula, call to a library function, bootstrap, etc.)
        \item The assumptions made should be given (e.g., Normally distributed errors).
        \item It should be clear whether the error bar is the standard deviation or the standard error of the mean.
        \item It is OK to report 1-sigma error bars, but one should state it. The authors should preferably report a 2-sigma error bar than state that they have a 96\% CI, if the hypothesis of Normality of errors is not verified.
        \item For asymmetric distributions, the authors should be careful not to show in tables or figures symmetric error bars that would yield results that are out of range (e.g., negative error rates).
        \item If error bars are reported in tables or plots, the authors should explain in the text how they were calculated and reference the corresponding figures or tables in the text.
    \end{itemize}

\item {\bf Experiments compute resources}
    \item[] Question: For each experiment, does the paper provide sufficient information on the computer resources (type of compute workers, memory, time of execution) needed to reproduce the experiments?
    \item[] Answer: \answerYes{} % Replace by \answerYes{}, \answerNo{}, or \answerNA{}.
    \item[] Justification:  We report the compute resources required for training and evaluation in the Appendix~\ref{appendix:compute_resources} and Appendix~\ref{appendix:model_complexity}.
    \item[] Guidelines:
    \begin{itemize}
        \item The answer \answerNA{} means that the paper does not include experiments.
        \item The paper should indicate the type of compute workers CPU or GPU, internal cluster, or cloud provider, including relevant memory and storage.
        \item The paper should provide the amount of compute required for each of the individual experimental runs as well as estimate the total compute. 
        \item The paper should disclose whether the full research project required more compute than the experiments reported in the paper (e.g., preliminary or failed experiments that didn't make it into the paper). 
    \end{itemize}
    
\item {\bf Code of ethics}
    \item[] Question: Does the research conducted in the paper conform, in every respect, with the NeurIPS Code of Ethics \url{https://neurips.cc/public/EthicsGuidelines}?
    \item[] Answer: \answerYes{} % Replace by \answerYes{}, \answerNo{}, or \answerNA{}.
    \item[] Justification: The research conducted in this paper fully complies with the NeurIPS Code of Ethics.
    \item[] Guidelines:
    \begin{itemize}
        \item The answer \answerNA{} means that the authors have not reviewed the NeurIPS Code of Ethics.
        \item If the authors answer \answerNo, they should explain the special circumstances that require a deviation from the Code of Ethics.
        \item The authors should make sure to preserve anonymity (e.g., if there is a special consideration due to laws or regulations in their jurisdiction).
    \end{itemize}

\item {\bf Broader impacts}
    \item[] Question: Does the paper discuss both potential positive societal impacts and negative societal impacts of the work performed?
    \item[] Answer: \answerYes{} % Replace by \answerYes{}, \answerNo{}, or \answerNA{}.
    \item[] Justification: We discuss both the potential positive and negative societal impacts of our work in the Appendix~\ref{appendix:broader_impact}.
    \item[] Guidelines:
    \begin{itemize}
        \item The answer \answerNA{} means that there is no societal impact of the work performed.
        \item If the authors answer \answerNA{} or \answerNo, they should explain why their work has no societal impact or why the paper does not address societal impact.
        \item Examples of negative societal impacts include potential malicious or unintended uses (e.g., disinformation, generating fake profiles, surveillance), fairness considerations (e.g., deployment of technologies that could make decisions that unfairly impact specific groups), privacy considerations, and security considerations.
        \item The conference expects that many papers will be foundational research and not tied to particular applications, let alone deployments. However, if there is a direct path to any negative applications, the authors should point it out. For example, it is legitimate to point out that an improvement in the quality of generative models could be used to generate Deepfakes for disinformation. On the other hand, it is not needed to point out that a generic algorithm for optimizing neural networks could enable people to train models that generate Deepfakes faster.
        \item The authors should consider possible harms that could arise when the technology is being used as intended and functioning correctly, harms that could arise when the technology is being used as intended but gives incorrect results, and harms following from (intentional or unintentional) misuse of the technology.
        \item If there are negative societal impacts, the authors could also discuss possible mitigation strategies (e.g., gated release of models, providing defenses in addition to attacks, mechanisms for monitoring misuse, mechanisms to monitor how a system learns from feedback over time, improving the efficiency and accessibility of ML).
    \end{itemize}
    
\item {\bf Safeguards}
    \item[] Question: Does the paper describe safeguards that have been put in place for responsible release of data or models that have a high risk for misuse (e.g., pre-trained language models, image generators, or scraped datasets)?
    \item[] Answer: \answerNA{} % Replace by \answerYes{}, \answerNo{}, or \answerNA{}.
    \item[] Justification: The paper does not involve any models or data with a high risk of misuse.
    \item[] Guidelines:
    \begin{itemize}
        \item The answer \answerNA{} means that the paper poses no such risks.
        \item Released models that have a high risk for misuse or dual-use should be released with necessary safeguards to allow for controlled use of the model, for example by requiring that users adhere to usage guidelines or restrictions to access the model or implementing safety filters. 
        \item Datasets that have been scraped from the Internet could pose safety risks. The authors should describe how they avoided releasing unsafe images.
        \item We recognize that providing effective safeguards is challenging, and many papers do not require this, but we encourage authors to take this into account and make a best faith effort.
    \end{itemize}

\item {\bf Licenses for existing assets}
    \item[] Question: Are the creators or original owners of assets (e.g., code, data, models), used in the paper, properly credited and are the license and terms of use explicitly mentioned and properly respected?
    \item[] Answer: \answerYes{} % Replace by \answerYes{}, \answerNo{}, or \answerNA{}.
    \item[] Justification: We show relevant license and attribution information in the Appendix~\ref{appendix:licenses_for_existing_assets}.
    \item[] Guidelines:
    \begin{itemize}
        \item The answer \answerNA{} means that the paper does not use existing assets.
        \item The authors should cite the original paper that produced the code package or dataset.
        \item The authors should state which version of the asset is used and, if possible, include a URL.
        \item The name of the license (e.g., CC-BY 4.0) should be included for each asset.
        \item For scraped data from a particular source (e.g., website), the copyright and terms of service of that source should be provided.
        \item If assets are released, the license, copyright information, and terms of use in the package should be provided. For popular datasets, \url{paperswithcode.com/datasets} has curated licenses for some datasets. Their licensing guide can help determine the license of a dataset.
        \item For existing datasets that are re-packaged, both the original license and the license of the derived asset (if it has changed) should be provided.
        \item If this information is not available online, the authors are encouraged to reach out to the asset's creators.
    \end{itemize}

\item {\bf New assets}
    \item[] Question: Are new assets introduced in the paper well documented and is the documentation provided alongside the assets?
    \item[] Answer: \answerYes{} % Replace by \answerYes{}, \answerNo{}, or \answerNA{}.
    \item[] Justification:  We will release our newly developed method along with documentation upon acceptance. The asset will include descriptions of model structure, training procedure and evaluation, and will be anonymized at submission time.
    \item[] Guidelines: 
    \begin{itemize}
        \item The answer \answerNA{} means that the paper does not release new assets.
        \item Researchers should communicate the details of the dataset\slash code\slash model as part of their submissions via structured templates. This includes details about training, license, limitations, etc. 
        \item The paper should discuss whether and how consent was obtained from people whose asset is used.
        \item At submission time, remember to anonymize your assets (if applicable). You can either create an anonymized URL or include an anonymized zip file.
    \end{itemize}

\item {\bf Crowdsourcing and research with human subjects}
    \item[] Question: For crowdsourcing experiments and research with human subjects, does the paper include the full text of instructions given to participants and screenshots, if applicable, as well as details about compensation (if any)? 
    \item[] Answer: \answerNA{} % Replace by \answerYes{}, \answerNo{}, or \answerNA{}.
    \item[] Justification: This paper does not involve crowdsourcing and research with human subjects.
    \item[] Guidelines:
    \begin{itemize}
        \item The answer \answerNA{} means that the paper does not involve crowdsourcing nor research with human subjects.
        \item Including this information in the supplemental material is fine, but if the main contribution of the paper involves human subjects, then as much detail as possible should be included in the main paper. 
        \item According to the NeurIPS Code of Ethics, workers involved in data collection, curation, or other labor should be paid at least the minimum wage in the country of the data collector. 
    \end{itemize}

\item {\bf Institutional review board (IRB) approvals or equivalent for research with human subjects}
    \item[] Question: Does the paper describe potential risks incurred by study participants, whether such risks were disclosed to the subjects, and whether Institutional Review Board (IRB) approvals (or an equivalent approval/review based on the requirements of your country or institution) were obtained?
    \item[] Answer: \answerNA{} % Replace by \answerYes{}, \answerNo{}, or \answerNA{}.
    \item[] Justification: This paper does not involve crowdsourcing nor research with human subjects.
    \item[] Guidelines:
    \begin{itemize}
        \item The answer \answerNA{} means that the paper does not involve crowdsourcing nor research with human subjects.
        \item Depending on the country in which research is conducted, IRB approval (or equivalent) may be required for any human subjects research. If you obtained IRB approval, you should clearly state this in the paper. 
        \item We recognize that the procedures for this may vary significantly between institutions and locations, and we expect authors to adhere to the NeurIPS Code of Ethics and the guidelines for their institution. 
        \item For initial submissions, do not include any information that would break anonymity (if applicable), such as the institution conducting the review.
    \end{itemize}

\item {\bf Declaration of LLM usage}
    \item[] Question: Does the paper describe the usage of LLMs if it is an important, original, or non-standard component of the core methods in this research? Note that if the LLM is used only for writing, editing, or formatting purposes and does \emph{not} impact the core methodology, scientific rigor, or originality of the research, declaration is not required.
    %this research? 
    \item[] Answer: \answerNA{} % Replace by \answerYes{}, \answerNo{}, or \answerNA{}.
    \item[] Justification: The core method development in this research does not involve LLMs as any important, original, or non-standard components. Minor usage was limited to writing assistance only.
    \item[] Guidelines:
    \begin{itemize}
        \item The answer \answerNA{} means that the core method development in this research does not involve LLMs as any important, original, or non-standard components.
        \item Please refer to our LLM policy in the NeurIPS handbook for what should or should not be described.
    \end{itemize}

\end{enumerate}

\end{document}